\newtheorem{theorem}{Theorem}
\newtheorem{corollary}{Corollary}
\newtheorem{lemma}{Lemma}
\newtheorem{definition}{Definition}
\theoremstyle{definition}
\newtheorem{example}{Example}
\DeclareMathOperator*{\argmax}{arg\,max}
\newcommand{\piBC}{\pi_{\textnormal{I}}}
\newcommand{\piE}{\pi_{\textnormal{E}}}
\newcommand{\piGA}{\pi_{\textnormal{I}}}
\title{Error Bounds of Imitating Policies and Environments\thanks{This work is supported by National Key R\&D Program of China (2018AAA0101100), NSFC (61876077), and Collaborative Innovation Center of Novel Software Technology and Industrialization. Yang Yu is the corresponding author. This work was done when Ziniu Li was an intern in Polixir Technologies.}}
\author{%
  Tian Xu$^1$, Ziniu Li$^{2,3}$, Yang Yu$^{1,3}$ \\
  $^1$National Key Laboratory for Novel Software Technology,\\Nanjing University, Nanjing 210023, China\\
  $^2$The Chinese University of Hong Kong, Shenzhen, Shenzhen 518172, China \\ 
  $^3$Polixir Technologies, Nanjing 210038, China \\
  \texttt{xut@lamda.nju.edu.cn, ziniuli@link.cuhk.edu.cn, yuy@nju.edu.cn} 
  }
\begin{document}

\maketitle

\begin{abstract}
  Imitation learning trains a policy by mimicking expert demonstrations. Various imitation methods were proposed and empirically evaluated, meanwhile, their theoretical understanding needs further studies. In this paper, we firstly analyze the value gap between the expert policy and imitated policies by two imitation methods, behavioral cloning and generative adversarial imitation. The results support that generative adversarial imitation can reduce the compounding errors compared to behavioral cloning, and thus has a better sample complexity. Noticed that by considering the environment transition model as a dual agent, imitation learning can also be used to learn the environment model. Therefore, based on the bounds of imitating policies, we further analyze the performance of imitating environments. The results show that environment models can be more effectively imitated by generative adversarial imitation than behavioral cloning, suggesting a novel application of adversarial imitation for model-based reinforcement learning. We hope these results could inspire future advances in imitation learning and model-based reinforcement learning.
\end{abstract}

\section{Introduction}

Sequential decision-making under uncertainty is challenging due to the stochastic dynamics and delayed feedback \cite{KearnsS02, AzarOM17}. Compared to reinforcement learning (RL) \cite{SuttonBook, puterman2014markov} that learns from delayed feedback, imitation learning (IL) \cite{BC, DBLP:conf/icml/NgR00, GAIL} learns from expert demonstrations that provide immediate feedback and thus is efficient in obtaining a good policy, which has been demonstrated in playing games~\cite{alphago}, robotic control~\cite{GiustiGCHRFFFSC16}, autonomous driving~\cite{CodevillaMLKD18}, etc.

Imitation learning methods have been designed from various perspectives. For instance, behavioral cloning (BC) \cite{BC,bcfromobs} learns a policy by directly minimizing the action probability discrepancy with supervised learning; apprenticeship learning (AL) \cite{abbeel04, a_game_theoretic_approch_to_ap} infers a reward function from expert demonstrations by inverse reinforcement learning \cite{DBLP:conf/icml/NgR00}, and subsequently learns a policy by reinforcement learning using the recovered reward function. Recently, \citeauthor{GAIL} \citep{GAIL} revealed that AL can be viewed as a state-action occupancy measure matching problem. From this connection, they proposed the algorithm generative adversarial imitation learning (GAIL). In GAIL, a discriminator scores the agent's behaviors based on the similarity to the expert demonstrations, and the agent learns to maximize the scores, resulting in expert-like behaviors. 

Many empirical studies of imitation learning have been conducted. It has been observed that, for example, GAIL often achieves better performance than BC \cite{GAIL, DAC, Value-Dice}. However, the theoretical explanations behind this observation have not been fully understood. Only until recently, there emerged studies towards understanding the generalization and computation properties of GAIL \cite{generalization_gail, gail_optimality}. 
In particular, \citeauthor{generalization_gail} \cite{generalization_gail}  studied the generalization ability of the so-called $\mathcal{R}$-distance given the complete expert trajectories, while \citeauthor{gail_optimality} \cite{gail_optimality} focused on the global convergence properties of GAIL under sophisticated neural network approximation assumptions.

In this paper, we present error bounds on the value gap between the expert policy and imitated policies from BC and GAIL, as well as the sample complexity of the methods. The error bounds indicate that the policy value gap is quadratic w.r.t. the horizon for BC, i.e., $1/(1-\gamma)^2$, and cannot be improved in the worst case, which implies large compounding errors \cite{efficient_reduction_IL, dagger}. Meanwhile, the policy value gap is only linear w.r.t. the horizon for GAIL, i.e., $1/(1-\gamma)$.
Similar to \cite{generalization_gail}, the sample complexity also hints that controlling the complexity of the discriminator set in GAIL could be beneficial to the generalization. But our analysis strikes that a richer discriminator set is still required to reduce the policy value gap. Besides, our results provide theoretical support for the experimental observation that GAIL can generalize well even provided with incomplete trajectories \cite{DAC}.

Moreover, noticed that by regarding the environment transition model as a dual agent, imitation learning can also be applied to learn the transition model \cite{VenkatramanHB15, Virtual-Taobao, Virtual-Didi}. Therefore, based on the analysis of imitating policies, we further analyze the error bounds of imitating environments. The results indicate that the environment model learning through adversarial approaches enjoys a linear policy evaluation error w.r.t. the model-bias, which improves the previous quadratic results \cite{slbo, mbpo} and suggests a promising application of GAIL for model-based reinforcement learning.


\section{Background}
\label{section:background}
\subsection{Markov Decision Process}
An infinite-horizon Markov decision process (MDP) \citep{SuttonBook, puterman2014markov} is described by a tuple $\mathcal{M}=\left( \mathcal{S}, \mathcal{A}, M^*, R, \gamma, d_0 \right)$, where $\mathcal{S}$ is the state space, $\mathcal{A}$ is the action space, and $d_0$ specifies the initial state distribution. We assume $\mathcal{S}$ and $\mathcal{A}$ are finite. The decision process runs as follows: at time step $t$, the agent observes a state $s_\textit{t}$ from the environment and executes an action $a_t$, then the environment sends a reward $r(s_t, a_t)$ to the agent and transits to a new state $s_{t+1}$ according to $M^* (\cdot | s_t, a_t)$. Without loss of generality, we assume that the reward function is bounded by $R_{\textnormal{max}}$, i.e., $ \vert r(s,a) \vert \leq R_{\textnormal{max}} , \forall (s, a) \in \mathcal{S} \times \mathcal{A}$.

A (stationary) policy $\pi(\cdot | s)$ specifies an action distribution conditioned on state $s$. The quality of policy $\pi$ is evaluated by its policy value (i.e., cumulative rewards with a discount factor $\gamma \in [0, 1)$): $V_{\pi} = \mathbb{E}\left[ \sum_{t=0}^{\infty} \gamma^t r(s_t, a_t)| s_{0} \sim d_0, a_t \sim \pi(\cdot|s_t), s_{t+1} \sim M^*(\cdot|s_t, a_t), t = 0, 1, 2, \cdots \right]$. The goal of reinforcement learning is to find an optimal policy $\pi^*$ such that it maximizes the policy value (i.e., $\pi^* = \argmax_{\pi} V_\pi$). 

Notice that, in an infinite-horizon MDP, the policy value mainly depends on a finite length of the horizon due to the discount factor. The effective planning horizon \cite{puterman2014markov} $1/(1-\gamma)$, i.e., the total discounting weights of rewards, shows how the discount factor $\gamma$ relates with the effective horizon.  
We will see that the effective planning horizon plays an important role in error bounds of imitation learning approaches. 

To facilitate later analysis, we introduce the \emph{discounted stationary state distribution} $
    d_{\pi}(s) = (1-\gamma)\sum\nolimits_{t=0}^{\infty} \gamma^{t} \operatorname{Pr}\left(s_{t}=s ; \pi\right)$
, and the \emph{discounted stationary state-action distribution} $\rho_{\pi}(s, a) = (1-\gamma)\sum\nolimits_{t=0}^{\infty}\gamma^{t}\operatorname{Pr}(s_t=s, a_t=a;\pi)
$. Intuitively, discounted stationary state (state-action) distribution measures the overall ``frequency'' of visiting a state (state-action). For simplicity, we will omit ``discounted stationary'' throughout. We add a superscript to value function and state (state-action) distribution, e.g., $V_{\pi}^{M^*}$, when it is necessary to clarify the MDP.

\subsection{Imitation Learning}
Imitation learning (IL) \cite{BC, DBLP:conf/icml/NgR00, abbeel04, a_game_theoretic_approch_to_ap, GAIL} trains a policy by learning from expert demonstrations. In contrast to reinforcement learning, imitation learning is provided with action labels from expert policies. We use $\piE$ to denote the expert policy and $\Pi$ to denote the candidate policy class throughout this paper. In IL, we are interested in the policy value gap $V_{\piE} - V_\pi$. In the following, we briefly introduce two popular methods considered in this paper, behavioral cloning (BC) \cite{BC} and generative adversarial imitation learning (GAIL) \cite{GAIL}.

\textbf{Behavioral cloning.} In the simplest case, BC minimizes the action probability discrepancy with Kullback\textendash Leibler (KL) divergence between the expert's action distribution and the imitating policy's action distribution. It can also be viewed as the maximum likelihood estimation in supervised learning.
\begin{equation}
\label{equation:bc_policy}
\min_{\pi \in \Pi}  \mathbb{E}_{s \sim d_{\piE}}\left[ D_{\mathrm{KL}}\bigl(\piE(\cdot|s), \pi(\cdot|s)\bigr) \right]:= \mathbb{E}_{(s, a) \sim \rho_{\piE}}\left[ \log \biggl( \frac{\piE(a|s)}{\pi(a|s)} \biggr) \right].
\end{equation}

\textbf{Generative adversarial imitation learning.} In GAIL, a discriminator $D$ learns to recognize whether a state-action pair comes from the expert trajectories, while a generator $\pi$ mimics the expert policy via maximizing the rewards given by the discriminator. The optimization problem is defined as:
\begin{equation*}
  \min_{\pi \in \Pi} \max_{D \in (0, 1)^{\mathcal{S} \times \mathcal{A}}} \mathbb{E}_{(s, a) \sim \rho_{\pi}}\left[\log \big(D(s, a) \big) \right] + \mathbb{E}_{(s, a) \sim \rho_{\piE}} \left[\log(1 - D(s, a) \big) \right].
\end{equation*}
When the discriminator achieves its optimum $D^* (s,a) =  {\rho_{\pi} (s,a)}/\left({\rho_\pi (s,a) + \rho_{\piE} (s,a)}\right)$, we can derive that GAIL is to minimize the state-action distribution discrepancy between the expert policy and the imitating policy with the Jensen-Shannon (JS) divergence (up to a constant):
\begin{equation}
\label{equation:gail_js}
 \min_{\pi \in \Pi}  D_{\mathrm{JS}}(\rho_{\piE}, \rho_{\pi}) := \frac{1}{2} \left[ D_{\mathrm{KL}} (\rho_{\pi}, \frac{\rho_{\pi}+\rho_{\piE}}{2} )+D_{\mathrm{KL}} (\rho_{\piE}, \frac{\rho_{\pi}+\rho_{\piE}}{2}) \right].
\end{equation}

\section{Related Work}
\label{section:related_work}

In the domain of imitating policies, prior studies~\cite{efficient_reduction_IL,a_reduction_from_al_to_classification,dagger, ChangKADL15} considered the finite-horizon setting and revealed that behavioral cloning \cite{BC} leads to the compounding errors (i.e., an optimality gap of $\mathcal{O}(T^2)$, where $T$ is the horizon length). DAgger~\cite{dagger} improved this optimality gap to $\mathcal{O}(T)$ at the cost of additional expert queries. Recently, based on generative adversarial network (GAN) \cite{GAN}, generative adversarial imitation learning \citep{GAIL} was proposed and had achieved much empirical success \cite{AIRL, DAC, Value-Dice, cai_2020_IL}. Though many theoretical results have been established for GAN \cite{Generalization_in_GAN, D-G_tradeoff, Towards_principle_ways_training_gan, JiangCCLWZ19}, the theoretical properties of GAIL are not well understood. To the best of our knowledge, only until recently, there emerged studies towards understanding the generalization and computation properties of GAIL \cite{generalization_gail, gail_optimality}. The closest work to ours is \cite{generalization_gail}, where the authors considered the generalization ability of GAIL under a finite-horizon setting with complete expert trajectories. In particular, they analyzed the generalization ability of the proposed $\mathcal{R}$-distance but they did not provide the bound for policy value gap, which is of interest in practice. On the other hand, the global convergence properties with neural network function approximation were further analyzed in \cite{gail_optimality}. 

In addition to BC and GAIL, apprenticeship learning (AL)~\cite{abbeel04, a_game_theoretic_approch_to_ap, SyedBS08, Jonathan16} is a promising candidate for imitation learning. AL infers a reward function from expert demonstrations by inverse reinforcement learning (IRL)~\cite{DBLP:conf/icml/NgR00}, and subsequently learns a policy by reinforcement learning using the recovered reward function. In particular, IRL aims to identify a reward function under which the expert policy's performance is optimal. Feature expectation matching (FEM) \cite{abbeel04} and game-theoretic apprenticeship learning (GTAL) \cite{a_game_theoretic_approch_to_ap} are two popular AL algorithms with theoretical guarantees under tabular MDP. {To obtain an $\epsilon$-optimal policy, FEM and GTAL require expert trajectories of $\mathcal{O}(\frac{k \log k}{ (1-\gamma)^2 \epsilon^2})$  and $\mathcal{O}(\frac{\log k}{ (1-\gamma)^2 \epsilon^2})$ respectively, where $k$ is the number of predefined feature functions.} 

In addition to imitating policies, learning environment transition models can also be treated by imitation learning by considering environment transition model as a dual agent. This connection has been utilized in \cite{Virtual-Taobao, Virtual-Didi} to model real-world environments and in \cite{VenkatramanHB15} to reduce the regret regarding model-bias following the idea of DAgger \citep{dagger}, where the model-bias refers to prediction errors when a learned environment model predicts the next state given the current state and current action. 
Many studies \cite{L_continuous_mbrl, slbo, mbpo} have shown that if the virtual environment is trained with the principle of behavioral cloning (i.e., minimizing one-step transition prediction errors), the learned policy from this learned environment also suffers from the issue of compounding errors regarding model-bias. We are also inspired by the connection between imitation learning and environment-learning, but we focus on applying the distribution matching property of generative adversarial imitation learning to alleviate model-bias. Noticed that in \cite{Virtual-Taobao, Virtual-Didi}, adversarial approaches have been adopted to learn a high fidelity virtual environment, but the reason why such approaches work was unclear. This paper provides a partial answer.


\section{Bounds on Imitating Policies}
\label{section_escape}

\subsection{Imitating Policies with Behavioral Cloning}
\label{section:the_compounding_error_issue}

It is intuitive to understand why behavioral cloning suffers from large compounding errors \cite{a_reduction_from_al_to_classification, dagger} as that the imitated policy, even with a small training error, may visit a state out of the expert demonstrations, which causes a larger decision error and a transition to further unseen states. Consequently, the policy value gap accumulates along with the planning horizon. The error bound of BC has been established in \cite{a_reduction_from_al_to_classification, dagger} under a finite-horizon setting, and here we present an extension to the infinite-horizon setting. 

\begin{theorem}\label{theorem_value_bc}
Given an expert policy $\piE$ and an imitated policy $\piBC$ with $\mathbb{E}_{s \sim d_{\piE}}[ D_{\mathrm{KL}}(\piE(\cdot|s), \piBC(\cdot|s)) ] \leq \epsilon$ (which can be achieved by BC with objective Eq.(\ref{equation:bc_policy})), we have that $ V_{\piE} - V_{\piBC} \leq \frac{2 \sqrt{2} R_{\mathrm{max}}}{(1-\gamma)^2} \sqrt{\epsilon}$.
\end{theorem}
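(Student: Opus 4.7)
The plan is to use the performance difference lemma together with Pinsker's inequality and Jensen's inequality, carrying the factor of $1/(1-\gamma)$ twice: once from the lemma itself and once from the advantage bound.

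First, I would apply the performance difference lemma (Kakade and Langford), which gives
\[
V_{\piE} - V_{\piBC} \;=\; \frac{1}{1-\gamma}\, \mathbb{E}_{s \sim d_{\piE}}\!\left[\,\mathbb{E}_{a \sim \piE(\cdot\mid s)}\bigl[A^{\piBC}(s,a)\bigr]\right],
\]
where $A^{\piBC}(s,a) = Q^{\piBC}(s,a) - V^{\piBC}(s)$ is the advantage of $\piBC$. Using the identity $\mathbb{E}_{a \sim \piBC(\cdot\mid s)}[A^{\piBC}(s,a)] = 0$, I rewrite the inner expectation as $\sum_a \bigl(\piE(a\mid s) - \piBC(a\mid s)\bigr)\, A^{\piBC}(s,a)$, so that the policy gap itself is exposed. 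By Hölder's inequality this is at most $\|\piE(\cdot\mid s) - \piBC(\cdot\mid s)\|_1 \cdot \|A^{\piBC}(s,\cdot)\|_\infty$, and since $|r| \leq R_{\max}$ forces $|V^{\piBC}|, |Q^{\piBC}| \leq R_{\max}/(1-\gamma)$, I get $\|A^{\piBC}\|_\infty \leq 2 R_{\max}/(1-\gamma)$.

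Next I would convert the $L_1$ policy distance to the KL assumption. Pinsker's inequality in its $L_1$ form gives $\|P - Q\|_1 \leq \sqrt{2\, D_{\mathrm{KL}}(P,Q)}$, and Jensen's inequality applied to the concave function $\sqrt{\cdot}$ allows me to push the square root outside the expectation over $d_{\piE}$:
\[
\mathbb{E}_{s \sim d_{\piE}}\bigl[\|\piE(\cdot\mid s) - \piBC(\cdot\mid s)\|_1\bigr] \;\leq\; \sqrt{2\,\mathbb{E}_{s \sim d_{\piE}}\!\bigl[D_{\mathrm{KL}}(\piE(\cdot\mid s), \piBC(\cdot\mid s))\bigr]} \;\leq\; \sqrt{2\epsilon}.
\]
Multiplying the three factors $\tfrac{1}{1-\gamma}$, $\tfrac{2R_{\max}}{1-\gamma}$, and $\sqrt{2\epsilon}$ produces exactly $\tfrac{2\sqrt{2}\, R_{\max}}{(1-\gamma)^2}\sqrt{\epsilon}$.

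The main obstacle is not any deep step but careful bookkeeping of constants: one has to use the $L_1$-version of Pinsker (factor $\sqrt{2}$) rather than the TV-version (factor $\sqrt{1/2}$), and one must bound the advantage by $2R_{\max}/(1-\gamma)$ instead of $R_{\max}/(1-\gamma)$; otherwise the final constant comes out as $\sqrt{2}$ instead of $2\sqrt{2}$. An alternative route would work directly with the occupancy measure, using $V_\pi = (1-\gamma)^{-1}\mathbb{E}_{\rho_\pi}[r]$ and the auxiliary bound $\|\rho_{\piE} - \rho_{\piBC}\|_1 \leq (1-\gamma)^{-1}\mathbb{E}_{d_{\piE}}[\|\piE - \piBC\|_1]$ derived via a telescoping argument on the transition kernel, but the performance difference lemma route seems cleaner and avoids an extra lemma.
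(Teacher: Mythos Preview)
Your proof is correct, but it takes a different route from the paper's own argument. You proceed via the performance-difference lemma (Kakade--Langford), exposing the inner expectation as $\sum_a (\piE - \piBC) A^{\piBC}$ and bounding the advantage by $2R_{\max}/(1-\gamma)$. The paper instead goes exactly through what you call the ``alternative route'': it first proves two occupancy-measure lemmas---$D_{\mathrm{TV}}(d_\pi, d_{\piE}) \le \tfrac{\gamma}{1-\gamma}\,\mathbb{E}_{d_{\piE}}[D_{\mathrm{TV}}(\pi,\piE)]$ via a Neumann-series/telescoping identity on $(I-\gamma P_\pi)^{-1}$, then $D_{\mathrm{TV}}(\rho_\pi,\rho_{\piE}) \le \tfrac{1}{1-\gamma}\,\mathbb{E}_{d_{\piE}}[D_{\mathrm{TV}}(\pi,\piE)]$---and combines them with $|V_\pi - V_{\piE}| \le \tfrac{2R_{\max}}{1-\gamma}D_{\mathrm{TV}}(\rho_\pi,\rho_{\piE})$ before applying Pinsker and Jensen. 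Your route is more direct for this particular theorem and avoids establishing the state-distribution lemma separately; the paper's route has the advantage that its intermediate lemmas (the $\rho$-distance bound and the value-to-$\rho$ bound) are reused verbatim in the GAIL analysis and the imitating-environments section, so the extra investment pays off downstream. One small remark on constants: the paper quotes Pinsker in the loose form $D_{\mathrm{TV}} \le \sqrt{2 D_{\mathrm{KL}}}$ (rather than the sharp $\sqrt{D_{\mathrm{KL}}/2}$), which is what produces the $2\sqrt{2}$ factor there; your accounting via the $L_1$ form and the $2R_{\max}/(1-\gamma)$ advantage bound lands on the same constant for a slightly different reason.
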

The proof by the coherent error-propagation analysis can be found in Appendix \ref{appendix_error_propagation}. Note that Theorem \ref{theorem_value_bc} is under the infinite sample situation. In the finite sample situation, one can further bound the generalization error $\epsilon$ in the RHS using classical learning theory (see Corollary \ref{limited_sample_BC}) and the proof can be found in Appendix \ref{appendix_error_propagation}.

\begin{corollary}\label{limited_sample_BC}
We use $\{(s^{(i)}_{\piE}, a^{(i)}_{\piE})\}_{i=1}^{m}$ to denote the expert demonstrations. Suppose that $\piE$ and $\piBC$ are deterministic and the provided function class $\Pi$ satisfies realizability, meaning that $\piE \in \Pi$. For policy $\piBC$ imitated by BC (see Eq. (\ref{equation:bc_policy})), $\forall \delta \in (0, 1)$,  with probability at least $1-\delta$, we have that
\begin{equation*}
    V_{\piE} - V_{\piBC} \leq \frac{2 R_{\textnormal{max}}}{(1-\gamma)^2} \left( \frac{1}{m} \log \left(\vert \Pi \vert \right) + \frac{1}{m} \log\left(\frac{1}{\delta}\right) \right).
\end{equation*}
\end{corollary}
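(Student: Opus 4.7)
The target bound has rate $\mathcal{O}(1/m)$, which is faster than what one would obtain by just combining Theorem~\ref{theorem_value_bc} with a finite-sample bound on the KL objective; that route yields only $\mathcal{O}(1/\sqrt{m})$, since $\sqrt{\epsilon}$ compounds with the $\mathcal{O}(1/\sqrt{m})$ deviation of empirical KL. The plan is therefore to \emph{bypass} Theorem~\ref{theorem_value_bc} and instead (i) exploit the deterministic-plus-realizable hypotheses to reduce BC to realizable $0/1$-classification, and (ii) reuse the total-variation step that already sits inside Theorem~\ref{theorem_value_bc}'s proof.

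First I would observe that when both $\piE$ and each $\pi\in\Pi$ are deterministic, the per-state KL $D_{\mathrm{KL}}(\piE(\cdot|s),\pi(\cdot|s))$ equals $0$ whenever $\pi(s)=\piE(s)$ and $+\infty$ otherwise. Consequently any empirical BC minimizer $\piBC$ over the sample $\{(s_{\piE}^{(i)},a_{\piE}^{(i)})\}_{i=1}^{m}$ must satisfy $\piBC(s_{\piE}^{(i)})=a_{\piE}^{(i)}$ for every $i$; realizability ($\piE\in\Pi$) guarantees that such a minimizer exists, so $\piBC$ is exactly an empirical risk minimizer for the $0/1$ loss in a realizable classification problem over the finite class $\Pi$.

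Next I would invoke the classical realizable-case PAC argument. For any fixed $\pi$ whose true disagreement rate
\[
\epsilon_{\mathrm{cls}}(\pi)\ :=\ \mathbb{E}_{s\sim d_{\piE}}\bigl[\mathds{1}\{\pi(s)\neq\piE(s)\}\bigr]
\]
exceeds a threshold $\epsilon$, the probability that $\pi$ agrees with $\piE$ on all $m$ i.i.d.\ samples drawn from $d_{\piE}$ is at most $(1-\epsilon)^m\le e^{-m\epsilon}$. A union bound over $|\Pi|$ candidates and the rearrangement $|\Pi|e^{-m\epsilon}\le\delta$ then yields, with probability at least $1-\delta$,
\[
\epsilon_{\mathrm{cls}}(\piBC)\ \le\ \tfrac{1}{m}\bigl(\log|\Pi|+\log(1/\delta)\bigr).
\]

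Finally I would transfer this classification bound to the policy-value gap through the standard total-variation-based performance-difference inequality
\[
V_{\piE}-V_{\piBC}\ \le\ \tfrac{2R_{\mathrm{max}}}{(1-\gamma)^2}\,\mathbb{E}_{s\sim d_{\piE}}\bigl[\|\piE(\cdot|s)-\piBC(\cdot|s)\|_{\mathrm{TV}}\bigr],
\]
which is precisely the intermediate estimate used in the proof of Theorem~\ref{theorem_value_bc} before Pinsker is applied. For deterministic policies the TV distance collapses to $\mathds{1}\{\piBC(s)\neq\piE(s)\}$, so the right-hand expectation equals $\epsilon_{\mathrm{cls}}(\piBC)$, and chaining the two displayed inequalities gives the stated bound. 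The main obstacle — really the key insight — is precisely this routing: one must not apply Theorem~\ref{theorem_value_bc} as a black box, because its square-root in $\sqrt{\epsilon}$ would destroy the $1/m$ rate; instead one reuses its pre-Pinsker TV step and exploits the fact that under the deterministic-plus-realizable assumptions the TV distance reduces to an indicator whose expectation is controlled at the fast realizable PAC rate.
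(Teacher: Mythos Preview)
Your proposal is correct and follows essentially the same route as the paper: the paper also bypasses the $\sqrt{\epsilon}$ of Theorem~\ref{theorem_value_bc} by invoking the pre-Pinsker total-variation bound $V_{\piE}-V_{\piBC}\le\tfrac{2R_{\max}}{(1-\gamma)^2}\mathbb{E}_{s\sim d_{\piE}}[D_{\mathrm{TV}}(\piE(\cdot|s),\piBC(\cdot|s))]$, collapses the TV to the $0/1$ indicator under determinism, and then applies the standard realizable finite-class PAC argument (union bound over $|\Pi|$ consistent hypotheses) to get the $\tfrac{1}{m}(\log|\Pi|+\log(1/\delta))$ rate. Your identification of the key obstacle---that black-boxing Theorem~\ref{theorem_value_bc} would yield only $\mathcal{O}(1/\sqrt{m})$---is exactly the point.
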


Moreover, we show that the value gap bound in Theorem \ref{theorem_value_bc} is tight up to a constant by providing an example shown in Figure \ref{figure:simple_mdp_content} (more details can be found in Appendix \ref{subsection:tightness_of_theorem_1}). Therefore, we conclude that the quadratic dependency on the effective planning horizon, $\mathcal{O}(1/(1-\gamma)^2)$, is inevitable in the worst-case. 

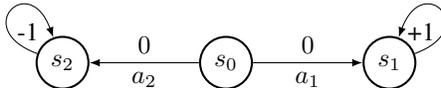
\begin{figure}[htbp]
\begin{center}
\resizebox{.5\textwidth}{!}{
\begin{tikzpicture}[auto,node distance=10mm,>=latex,font=\normalsize]
\tikzstyle{round}=[thick,draw=black,circle]
\node[round] (s0) {$s_0$};
\node[round, right=0mm and 15mm of s0] (s1) {$s_1$};
\node[round, left=0mm and 15mm of s0] (s2) {$s_2$};

\draw[->] (s0) --  node [above] {$0$} node [below] {$a_1$} (s1);
\draw[->] (s0) --  node [above] {$0$} node [below] {$a_2$} (s2);

\draw[->] (s1) [out=20,in=70,loop] to node [near start, anchor=east] (m1) {+1} (s1);
\draw[->] (s2) [out=160,in=110,loop] to node [near start, anchor=west] (m2) {-1} (s2);
\end{tikzpicture}
}
\end{center}
\caption{A ``hard'' deterministic MDP corresponding to Theorem \ref{theorem_value_bc}. Digits on arrows are corresponding rewards. Initial state is $s_0$ while $s_1$ and $s_2$ are two absorbing states.}
\label{figure:simple_mdp_content}
\end{figure}


\subsection{Imitating Policies with GAIL}
Different from BC, GAIL \citep{GAIL} is to minimize the state-action distribution discrepancy with JS divergence. The state-action distribution discrepancy captures the temporal structure of Markov decision process, thus it is more favorable in imitation learning. Recent researches \cite{f-gan, divergence_minimization_gu} showed that besides JS divergence, discrepancy measures based on a general class, $f$-divergence \cite{f-divergence, information-theory}, can be applied to design discriminators. Given two distributions $\mu$ and $\nu$, $f$-divergence is defined as $D_{f} (\mu, \nu) = \int \mu(x) f(\frac{\mu(x)}{\nu(x)}) dx$, where $f(\cdot)$ is a convex function that satisfies $f(1) = 0$. Here, we consider GAIL using some common $f$-divergences listed in Table \ref{table:f_divergence} in Appendix \ref{subsection:f-divergence GAIL}. 
\label{section:generalization_of_gail}
\begin{lemma}
\label{lemma_policy_gail_value}
Given an expert policy $\piE$ and an imitated policy $\piGA$ with $D_{f} \left( \rho_{\piGA}, \rho_{\piE} \right) \leq \epsilon$ (which can be achieved by GAIL) using the $f$-divergence in Table \ref{table:f_divergence}, we have that $  V_{\piE} - V_{\piGA} \leq \mathcal{O} \bigl(  \frac{1}{1-\gamma} \sqrt{\epsilon} \bigr)$.
\end{lemma}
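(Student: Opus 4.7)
The plan is to reduce the policy value gap to a distance between the two state-action occupancy measures, and then invoke a Pinsker-type inequality to pass from the $f$-divergence bound to that distance.

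First I would use the standard identity
\begin{equation*}
V_{\pi} \;=\; \frac{1}{1-\gamma}\,\mathbb{E}_{(s,a)\sim\rho_{\pi}}\!\bigl[r(s,a)\bigr],
\end{equation*}
which follows directly from the definitions of $V_{\pi}$ and $\rho_{\pi}$ in Section \ref{section:background}. Applying this to both $\piE$ and $\piGA$ and rewriting the difference as a signed sum over $\mathcal{S}\times\mathcal{A}$, together with $|r(s,a)|\le R_{\mathrm{max}}$, gives
\begin{equation*}
V_{\piE}-V_{\piGA} \;\le\; \frac{R_{\mathrm{max}}}{1-\gamma}\sum_{s,a}\bigl|\rho_{\piE}(s,a)-\rho_{\piGA}(s,a)\bigr| \;=\; \frac{2R_{\mathrm{max}}}{1-\gamma}\,D_{\mathrm{TV}}\!\bigl(\rho_{\piE},\rho_{\piGA}\bigr).
\end{equation*}
This is the key reduction: the dependence on the effective horizon is only $1/(1-\gamma)$, since the value function is already an expectation under $\rho_{\pi}$ and no extra horizon factor is introduced.

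Next I would translate the $f$-divergence bound into a total-variation bound. For every $f$-divergence in Table \ref{table:f_divergence} there is a Pinsker-type inequality of the form $D_{\mathrm{TV}}(\mu,\nu)\le C_f\sqrt{D_f(\mu,\nu)}$ for an absolute constant $C_f$. Concretely: Pinsker's inequality gives this for KL with $C=\sqrt{1/2}$; for Jensen--Shannon one uses $D_{\mathrm{TV}}\le\sqrt{2 D_{\mathrm{JS}}}$ (obtainable from Pinsker on each of the two mixture terms plus convexity); for squared Hellinger the inequality $D_{\mathrm{TV}}^2\le 2 H^2$ is standard; and the total-variation case is trivial. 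I would simply cite these as known and, for completeness, collect the appropriate constants $C_f$ in the appendix (alongside Table \ref{table:f_divergence}). Combining with the previous display yields
\begin{equation*}
V_{\piE}-V_{\piGA}\;\le\;\frac{2R_{\mathrm{max}}\,C_f}{1-\gamma}\sqrt{\epsilon}\;=\;\mathcal{O}\!\Bigl(\tfrac{1}{1-\gamma}\sqrt{\epsilon}\Bigr),
\end{equation*}
which is the claimed bound.

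The only nontrivial step is establishing a uniform Pinsker-type inequality for \emph{every} divergence in Table \ref{table:f_divergence}; once that is in place, the argument is short. I expect this case analysis (or a unified derivation via the local quadratic behavior of $f$ near $1$) to be the main bookkeeping obstacle, but it involves no new ideas beyond well-known information-theoretic inequalities. Importantly, contrasting with the BC bound in Theorem \ref{theorem_value_bc}, no error-propagation step is needed here, which is precisely why the horizon dependence drops from $1/(1-\gamma)^2$ to $1/(1-\gamma)$: GAIL controls the occupancy measure directly, whereas BC controls only the conditional action distribution under $d_{\piE}$.
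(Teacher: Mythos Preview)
Your proposal is correct and follows essentially the same approach as the paper: bound the value gap by $\frac{2R_{\max}}{1-\gamma}D_{\mathrm{TV}}(\rho_{\piE},\rho_{\piGA})$ via the occupancy-measure identity for $V_\pi$, then convert the $f$-divergence assumption into a total-variation bound by a case-by-case Pinsker-type inequality for each entry of Table~\ref{table:f_divergence}. The paper's proof is exactly this two-step argument (with the first step factored out as Lemma~\ref{lemma:value_error_based_state_action_discrepancy}); the only cosmetic difference is that Table~\ref{table:f_divergence} contains Reverse KL and Pearson $\chi^2$ rather than total variation, so your case list should swap those in.
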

The proof can be found in Appendix \ref{subsection:f-divergence GAIL}. Lemma \ref{lemma_policy_gail_value} indicates that the optimality gap of GAIL grows linearly with the effective horizon $1/(1-\gamma)$, multiplied by the square root of the $f$-divergence error $D_{f}$. Compared to Theorem \ref{theorem_value_bc}, this result indicates that GAIL with the $f$-divergence could have fewer compounding errors if the objective function is properly optimized. Note that this result does not claim that GAIL is overall better than BC, but can highlight that GAIL has a linear dependency on the planning horizon compared to the quadratic one in BC. 


Analyzing the generalization ability of GAIL with function approximation is somewhat more complicated, since GAIL involves a minimax optimization problem. Most of the existing learning theories~\cite{FML, MLBook}, however, focus on the problems that train one model to minimize the empirical loss, and therefore are hard to be directly applied. In particular, the discriminator in GAIL is often parameterized by certain neural networks, and therefore it may not be optimum within a restrictive function class. In that case, we may view the imitated policy is to minimize the \emph{neural network distance} \cite{Generalization_in_GAN} instead of the ideal $f$-divergence. 

\begin{definition}[Neural network distance \cite{Generalization_in_GAN}]
\label{def_generalization}
For a class of neural networks $\mathcal{D}$, the neural network distance between two (state-action) distributions, $\mu$ and $\nu$, is defined as
\begin{equation*}
    d_{\mathcal{D}}(\mu, \nu) = \underset{D \in \mathcal{D}}{\sup} \left \{ \mathbb{E}_{(s, a) \sim \mu}[D(s,a)] - \mathbb{E}_{(s, a) \sim \nu}[D(s, a)] \right\}.
\end{equation*}
\end{definition}

Interestingly, it has been shown that the generalization ability of neural network distance is substantially different from the original divergence measure \cite{Generalization_in_GAN,D-G_tradeoff} due to the limited representation ability of the discriminator set $\mathcal{D}$. For instance, JS-divergence may not generalize even with sufficient samples \cite{Generalization_in_GAN}. In the following, we firstly discuss the generalization ability of the neural network distance, based on which we formally give the upper bound of the policy value gap.    

To ensure the non-negativity of neural network distance, we assume that the function class $\mathcal{D}$ contains the zero function, i.e., $\exists D \in \mathcal{D}, D(s,a) \equiv 0$. Neural network distance is also known as integral probability metrics (IPM) \cite{IPM}.

As an illustration, $f$-divergence is connected with neural network distance by its variational representation \cite{D-G_tradeoff}:
\begin{equation*}
d_{f, \mathcal{D}} (\mu, \nu) = \sup_{d \in \mathcal{D}} \left\{ \mathbb{E}_{(s, a) \sim \mu}[D(s, a)] - \mathbb{E}_{(s, a) \sim \nu}[D(s, a)] - \mathbb{E}_{(s, a) \sim \mu}[\phi^*(f(s, a))] \right\},   
\end{equation*}
where $\phi^*$ is the (shifted) convex conjugate of $f$. Thus, considering $\phi^* = 0$ and choosing the activation function of the last layer in the discriminator as the sigmoid function $g(t) = 1/\left(1+\exp(-t) \right)$ recovers the original GAIL objective \cite{D-G_tradeoff}. Again, such defined neural network distance is still different from the original $f$-divergence because of the limited representation ability of $\mathcal{D}$. Thereafter, we may consider GAIL is to find a policy $\piGA$ by minimizing  $d_{\mathcal{D}}(\rho_{\piE}, \rho_{\piGA})$.

As another illustration, when $\mathcal{D}$ is the class of all $1$-Lipschitz continuous functions, $d_{D}(\mu, \nu)$ is the well-known Wasserstein distance \cite{WGAN}. From this viewpoint, we give an instance called Wasserstein GAIL (WGAIL) in Appendix \ref{subsection:Wasserstein GAIL}, where the discriminator in practice is to approximate $1$-Lipschitz functions with neural networks. However, note that neural network distance in WGAIL is still distinguished from Wasserstein distance since $\mathcal{D}$ cannot contain all $1$-Lipschitz continuous functions.

In practice, GAIL minimizes the empirical neural network distance $d_{\mathcal{D}}(\hat{\rho}_{\piE}, \hat{\rho}_\pi)$, where $\hat{\rho}_{\piE}$ and $\hat{\rho}_{\pi}$ denote the empirical version of population distribution $\rho_{\piE}$ and $\rho_{\pi}$ with $m$ samples. To analyze its generalization property, we employ the standard Rademacher complexity technique. The Rademacher random variable $\sigma$ is defined as $\Pr(\sigma = +1) = \Pr(\sigma = -1) = 1/2$. Given a function class $\mathcal{F}$ and a dataset $Z = (z_1, z_2, \cdots, z_m)$ that is i.i.d. drawn from distribution $\mu$, the empirical Rademacher complexity $\hat{\mathcal{R}}_{\mu}^{(m)}(\mathcal{F}) = \mathbb{E}_{\bm{\sigma}}[\sup_{f \in \mathcal{F}} \frac{1}{m} \sum_{i=1}^{m} \sigma_i f(z_i)]$ measures the richness of function class $\mathcal{F}$ by the ability to fit random variables \cite{FML, MLBook}. The generalization ability of GAIL is analyzed in \cite{generalization_gail} under a different definition. They focused on how many trajectories, rather than our focus on state-action pairs, are sufficient to guarantee generalization. Importantly, we further disclose the policy value gap in Theorem \ref{theorem_gail} based on the neural network distance. 

\begin{lemma}[Generalization of neural network distance]
\label{lemma_gail_neural_distance}
Consider a discriminator class set $\mathcal{D}$ with $\Delta$-bounded value functions, i.e., $|D(s, a)| \leq \Delta$, for all $(s, a) \in \mathcal{S} \times \mathcal{A}, D \in \mathcal{D}$. Given an expert policy $\piE$ and an imitated policy $\piGA$ with $d_{\mathcal{D}} (\hat{\rho}_{\piE}, \hat{\rho}_{\piGA}) - \inf_{\pi \in \Pi} d_{\mathcal{D}} (\hat{\rho}_{\piE}, \hat{\rho}_{\pi})\leq \hat \epsilon$, then $\forall \delta \in (0,1)$, with probability at least $1-\delta$, we have
\begin{equation*}
        d_{\mathcal{D}} ( \rho_{\piE}, \rho_{\piGA}) \leq \underbrace{\underset{\pi \in \Pi}{\inf} d_{\mathcal{D}}( \hat{\rho}_{\piE}, \hat{\rho}_{\pi})}_{\mathrm{Appr} (\Pi)} +  \underbrace{2\hat{\mathcal{R}}_{\rho_{\piE}}^{(m)}(\mathcal{D}) +2\hat{\mathcal{R}}_{\rho_{\piGA}}^{(m)}(\mathcal{D}) + 12 \Delta \sqrt{\frac{ \log(2/\delta)}{m}}}_{\mathrm{Estm}(\mathcal{D}, m, \delta)} + \hat \epsilon.
\end{equation*}
\end{lemma}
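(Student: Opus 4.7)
The plan is to derive the bound from a triangle-inequality decomposition of the neural network distance, handling the population-to-empirical gaps by concentration of measure and the empirical gap between $\hat\rho_{\piE}$ and $\hat\rho_{\piGA}$ by the optimization hypothesis. Since $d_{\mathcal{D}}(\mu,\nu)=\sup_{D\in\mathcal{D}}\{\mathbb{E}_\mu[D]-\mathbb{E}_\nu[D]\}$ is an integral probability metric, the decomposition $\mathbb{E}_\mu[D]-\mathbb{E}_\nu[D]=(\mathbb{E}_\mu[D]-\mathbb{E}_{\mu'}[D])+(\mathbb{E}_{\mu'}[D]-\mathbb{E}_\nu[D])$ followed by taking suprema term-by-term yields the triangle inequality $d_{\mathcal{D}}(\mu,\nu)\leq d_{\mathcal{D}}(\mu,\mu')+d_{\mathcal{D}}(\mu',\nu)$. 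Instantiating twice with the two empirical distributions gives
\begin{equation*}
d_{\mathcal{D}}(\rho_{\piE},\rho_{\piGA}) \;\leq\; d_{\mathcal{D}}(\rho_{\piE},\hat\rho_{\piE}) \;+\; d_{\mathcal{D}}(\hat\rho_{\piE},\hat\rho_{\piGA}) \;+\; d_{\mathcal{D}}(\hat\rho_{\piGA},\rho_{\piGA}).
\end{equation*}
The middle term is immediately bounded by the hypothesis on $\hat\epsilon$: $d_{\mathcal{D}}(\hat\rho_{\piE},\hat\rho_{\piGA})\leq \inf_{\pi\in\Pi}d_{\mathcal{D}}(\hat\rho_{\piE},\hat\rho_\pi)+\hat\epsilon=\mathrm{Appr}(\Pi)+\hat\epsilon$, and this step is purely definitional.

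For the two outer terms I would invoke the standard McDiarmid-plus-symmetrization argument from empirical process theory. Viewing $d_{\mathcal{D}}(\rho_{\piE},\hat\rho_{\piE})=\sup_{D\in\mathcal{D}}\{\mathbb{E}_{\rho_{\piE}}[D]-\frac{1}{m}\sum_{i=1}^m D(s_i,a_i)\}$ as a function of the $m$ i.i.d.\ samples from $\rho_{\piE}$, the boundedness $|D|\leq\Delta$ provides bounded differences of magnitude $2\Delta/m$, so McDiarmid concentrates this supremum around its expectation up to a term of order $\Delta\sqrt{\log(1/\delta)/m}$. A symmetrization step then bounds the expectation by $2\mathcal{R}^{(m)}_{\rho_{\piE}}(\mathcal{D})$, and a second application of McDiarmid replaces the population Rademacher complexity by its empirical counterpart $\hat{\mathcal{R}}^{(m)}_{\rho_{\piE}}(\mathcal{D})$ at the cost of another term of the same order. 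The identical argument on samples from $\rho_{\piGA}$ bounds the third outer term by $2\hat{\mathcal{R}}^{(m)}_{\rho_{\piGA}}(\mathcal{D})$ plus a concentration tail, and a union bound (with $\delta/2$ in each of the two pieces) delivers the stated constant $12\Delta\sqrt{\log(2/\delta)/m}$. Note that only the one-sided quantities $\mathbb{E}_\mu[D]-\mathbb{E}_{\hat\mu}[D]$ and its reverse need controlling, both of which symmetrization handles in the same way, so asymmetry of $d_{\mathcal{D}}$ causes no trouble.

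The main subtlety I expect to spell out is that $\piGA$ is itself a random object produced by the learner, so the concentration bound for the third term cannot naively condition on a fixed policy. The form of the stated bound, which uses the empirical Rademacher complexity under the specific distribution $\rho_{\piGA}$ rather than a uniform complexity over $\pi\in\Pi$, indicates the intended modeling choice: the $m$ samples defining $\hat\rho_{\piGA}$ are taken as fresh rollouts from $\rho_{\piGA}$ drawn after $\piGA$ has been fixed, so that concentration is applied conditional on $\piGA$. This matches the on-policy data-collection protocol used in GAIL training and avoids the more cumbersome alternative of passing to uniform convergence over the composite class $\{D\circ\rho_\pi:D\in\mathcal{D},\pi\in\Pi\}$, which would inflate the complexity term beyond what appears in the lemma. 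Once this choice is stated explicitly, the remainder of the argument is standard empirical-process bookkeeping.
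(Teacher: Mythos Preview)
Your proposal is correct and follows essentially the same route as the paper: the paper bounds $d_{\mathcal{D}}(\rho_{\piE},\rho_{\piGA})-d_{\mathcal{D}}(\hat\rho_{\piE},\hat\rho_{\piGA})$ directly (rather than via the triangle inequality) but arrives at the same two one-sided suprema $\sup_{D}\bigl|\mathbb{E}_{\rho}[D]-\mathbb{E}_{\hat\rho}[D]\bigr|$, which it then controls by McDiarmid plus symmetrization plus a second McDiarmid to pass to empirical Rademacher complexity, exactly as you outline. Your explicit discussion of the conditioning on $\piGA$ before drawing the $m$ fresh rollouts is a point the paper leaves implicit.
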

The proof can be found in Appendix \ref{appendix_proof_lemma_gail_neural_distance}. Here $\mathrm{Appr} (\Pi)$ corresponds to the approximation error induced by the limited policy class $\Pi$. $\mathrm{Estm}(\mathcal{D}, m, \delta)$ denotes the estimation error of GAIL regarding to the complexity of discriminator class and the number of samples. Lemma \ref{lemma_gail_neural_distance} implies that GAIL generalizes if the complexity of discriminator class $\mathcal{D}$ is properly controlled. Concretely, a simpler discriminator class reduces the estimation error, then tends to reduce the neural network distance. Here we provide an example of neural networks with ReLU activation functions to illustrate this.

\begin{example}[Neural Network Discriminator Class]
We consider the neural networks with ReLU activation functions $(\sigma_1, \dots, \sigma_{L})$. We use $b_\mathrm{s}$ to denote the spectral norm bound and $b_\mathrm{n}$ to denote the matrix $(2,1)$ norm bound. The discriminator class consists of $L$-layer neural networks $D_{\mathbf{A}}$:
\begin{equation*}
    \mathcal{D} := \bigl\{ D_{\mathbf{A}} : \mathbf{A} = (A_1, \dots, A_L), \| A_i \|_{\sigma} \leq b_\mathrm{s}, \|A_{i}^\top \|_{2, 1} \leq b_\mathrm{n}, \forall i \in \{1, \dots, L\} \bigr\} ,
\end{equation*}
 where $D_{\mathbf{A}} (s,a) = \sigma_{L} (A_{L} \cdots \sigma_{1} (A_1 [s^\top, a^\top]^\top)).$ Then the spectral normalized complexity $R_{\mathbf{A}}$ of network $D_{\mathbf{A}}$ is $\mathcal{O} (L^{\frac{3}{2}} {b_\mathrm{s}}^{L-1} b_\mathrm{n})$ (see \cite{margin_bounds_for_NN} for more details). Derived by the Theorem 3.4 in \cite{margin_bounds_for_NN} and Lemma \ref{lemma_gail_neural_distance}, with probability at least $1-\delta$, we have
\begin{equation*}
\resizebox{\textwidth}{!}{
    $d_{\mathcal{D}} (\rho_{\piE}, \rho_{\piGA}) \leq \mathcal{O} \left( \frac{L^{\frac{3}{2}} {b_{\mathrm{s}}}^{L-1} b_{\mathrm{n}}}{m} \biggl(1 + \log \biggl(\frac{m}{L^{\frac{3}{2}} b_{\mathrm{s}}^{L-1} b_{\mathrm{n}}} \biggr) \biggr) + \Delta \sqrt{\frac{\log(1/\delta)}{m}} \right) + \inf_{\pi \in \Pi} d_{\mathcal{D}} (\hat{\rho}_{\piE}, \hat{\rho}_{\pi}) + \hat \epsilon.$
    }
\end{equation*}
\end{example}
From a theoretical view, reducing the number of layers $L$ could reduce the spectral normalized complexity $R_{\mathbf{A}}$ and the neural network distance $d_{\mathcal{D}}(\rho_{\piE}, \rho_{\piGA})$. However, we did not empirically observe that this operation significantly affects the performance of GAIL. On the other hand, consistent with \cite{DAC, Value-Dice}, we find the gradient penalty technique \cite{GP-WGAN} can effectively control the model complexity since this technique gives preference for $1$-Lipschitz continuous functions. In this way, the number of candidate functions decreases, and thus the Rademacher complexity of discriminator class is controlled. We also note that the information bottleneck \cite{Information_bottleneck} technique helps to control the model complexity and to improve GAIL's performance in practice \cite{var_dis_bottleneck} but the rigorous theoretical explanation is unknown.

However, when the discriminator class is restricted to a set of neural networks with relatively small complexity, it is not safe to conclude that the policy value gap $V_{\piE} - V_{\piGA}$ is small when $d_{\mathcal{D}}(\rho_{\piE}, \rho_{\piGA})$ is small. As an extreme case, if the function class $\mathcal{D}$ only contains constant functions, the neural network distance always equals to zero while the policy value gap could be large. Therefore, we still need a richer discriminator set to distinguish different policies. To substantiate this idea, we introduce the linear span of the discriminator class:
$
\text{span} (\mathcal{D}) = \{ c_0 + \sum_{i=1}^{n} c_i D_i : c_0, c_i \in \mathbb{R}, D_i \in \mathcal{D}, n \in \mathbb{N} \}.
$
Furthermore, we assume that $\text{span} (\mathcal{D})$, rather than $\mathcal{D}$, has enough capacity such that the ground truth reward function $r$ lies in it and define the \emph{compatible coefficient} as:
\begin{equation*}
    \| r \|_{\mathcal{D}} = \inf \left\{ \sum_{i=1}^n \vert c_i \vert: r = \sum_{i=1}^n c_i D_i + c_0, \forall n \in \mathbb{N}, c_0, c_i \in \mathbb{R}, D_i \in \mathcal{D} \right\}.
\end{equation*}
Here, $\| r \|_{\mathcal{D}}$ measures the minimum number of functions in $\mathcal{D}$ required to represent $r$ and $\| r \|_{\mathcal{D}}$ decreases when the discriminator class becomes richer. Now we present the result on generalization ability of GAIL in the view of policy value gap. 
\begin{theorem}[GAIL Generalization]
\label{theorem_gail}
Under the same assumption of Lemma \ref{lemma_gail_neural_distance} and suppose that the ground truth reward function $r$ lies in the linear span of discriminator class, with probability at least $1-\delta$, the following inequality holds.
\begin{equation*}
    \begin{aligned}
         V_{\piE} - V_{\piGA} \leq  \frac{ \| r\|_{\mathcal{D}}}{1-\gamma} \bigl(\mathrm{Appr}(\Pi) + \mathrm{Estm}(\mathcal{D}, m, \delta) + \hat \epsilon \bigr).
    \end{aligned}
\end{equation*}
\end{theorem}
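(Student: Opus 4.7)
The plan is to connect the policy value gap to the neural network distance by exploiting the assumption that the reward lies in the linear span of $\mathcal{D}$, and then close the bound by invoking Lemma~\ref{lemma_gail_neural_distance}. First, I would rewrite the value gap in terms of state-action occupancy measures via the identity $V_{\pi} = \frac{1}{1-\gamma}\mathbb{E}_{(s,a)\sim\rho_{\pi}}[r(s,a)]$, which gives
\begin{equation*}
V_{\piE} - V_{\piGA} = \frac{1}{1-\gamma}\left(\mathbb{E}_{\rho_{\piE}}[r] - \mathbb{E}_{\rho_{\piGA}}[r]\right).
\end{equation*}
So the task reduces to bounding the reward-expectation gap between the two occupancy measures.

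Second, I would use the realizability of $r$ in $\mathrm{span}(\mathcal{D})$ together with the definition of the compatible coefficient. For any $\eta > 0$, choose an expansion $r = c_0 + \sum_{i=1}^{n} c_i D_i$ with $D_i \in \mathcal{D}$ and $\sum_{i=1}^{n} \vert c_i \vert \leq \|r\|_{\mathcal{D}} + \eta$. Since $c_0$ is a constant, it cancels when subtracting expectations, giving
\begin{equation*}
\mathbb{E}_{\rho_{\piE}}[r] - \mathbb{E}_{\rho_{\piGA}}[r] = \sum_{i=1}^{n} c_i \bigl(\mathbb{E}_{\rho_{\piE}}[D_i] - \mathbb{E}_{\rho_{\piGA}}[D_i]\bigr).
\end{equation*}
Each term $\mathbb{E}_{\rho_{\piE}}[D_i] - \mathbb{E}_{\rho_{\piGA}}[D_i]$ is at most $d_{\mathcal{D}}(\rho_{\piE}, \rho_{\piGA})$ by Definition~\ref{def_generalization}, and absorbing the sign of $c_i$ into the choice of $D_i$ (so we may assume $\mathcal{D}$ is closed under negation, which is the standard convention for an IPM-type discriminator class) yields $\vert \mathbb{E}_{\rho_{\piE}}[D_i] - \mathbb{E}_{\rho_{\piGA}}[D_i] \vert \leq d_{\mathcal{D}}(\rho_{\piE}, \rho_{\piGA})$. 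Summing with the triangle inequality gives
\begin{equation*}
\bigl\vert \mathbb{E}_{\rho_{\piE}}[r] - \mathbb{E}_{\rho_{\piGA}}[r] \bigr\vert \leq (\|r\|_{\mathcal{D}} + \eta) \cdot d_{\mathcal{D}}(\rho_{\piE}, \rho_{\piGA}),
\end{equation*}
and letting $\eta \downarrow 0$ (to handle the fact that the infimum defining $\|r\|_{\mathcal{D}}$ may not be attained) produces $\vert \mathbb{E}_{\rho_{\piE}}[r] - \mathbb{E}_{\rho_{\piGA}}[r] \vert \leq \|r\|_{\mathcal{D}} \cdot d_{\mathcal{D}}(\rho_{\piE}, \rho_{\piGA})$.

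Third, I would plug in Lemma~\ref{lemma_gail_neural_distance}, which with probability at least $1-\delta$ bounds $d_{\mathcal{D}}(\rho_{\piE}, \rho_{\piGA}) \leq \mathrm{Appr}(\Pi) + \mathrm{Estm}(\mathcal{D}, m, \delta) + \hat{\epsilon}$. Combining the two inequalities with the $\frac{1}{1-\gamma}$ factor from the value-occupancy identity gives exactly the stated bound.

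The main obstacle is the sign-handling in the third paragraph: the neural network distance in Definition~\ref{def_generalization} is one-sided, whereas the decomposition of $r$ uses signed scalars $c_i$. The cleanest resolution is to assume (or verify for the discriminator classes under consideration) that $\mathcal{D}$ is symmetric, so that $d_{\mathcal{D}}$ is a genuine metric; alternatively, one can redefine the expansion by writing each signed coefficient as the difference of two nonnegative ones and pair them with $D_i$ and $-D_i$. Either route is essentially bookkeeping, so I expect no serious technical difficulty beyond this and the $\eta$-limit used because the infimum in $\|r\|_{\mathcal{D}}$ need not be achieved.
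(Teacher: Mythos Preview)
Your proposal is correct and follows essentially the same route as the paper: rewrite the value gap via occupancy measures, expand $r$ in $\mathrm{span}(\mathcal{D})$ so the constant cancels and each summand is dominated by $d_{\mathcal{D}}$, then invoke Lemma~\ref{lemma_gail_neural_distance}. You are in fact more careful than the paper on two technical points it glosses over---the $\eta$-argument for the infimum in $\|r\|_{\mathcal{D}}$ and the need for symmetry of $\mathcal{D}$ (or equivalently closure under negation) to bound $\bigl|\mathbb{E}_{\rho_{\piE}}[D_i]-\mathbb{E}_{\rho_{\piGA}}[D_i]\bigr|$ by the one-sided neural network distance.
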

The proof can be found in Appendix \ref{appendix:proof_of_gail_generalization}. Theorem \ref{theorem_gail} discloses that the policy value gap grows linearly with the effective horizon, due to the global structure of state-action distribution matching. This is an advantage of GAIL when only a few expert demonstrations are provided. Moreover, Theorem \ref{theorem_gail} suggests seeking a trade-off on the complexity of discriminator class: a simpler discriminator class enjoys a smaller estimation error, but could enlarge the compatible coefficient. Finally, Theorem \ref{theorem_gail} implies the generalization ability holds when provided with some state-action pairs, explaining the phenomenon that GAIL still performs well when only having access to incomplete trajectories \cite{DAC}. One of the limitations of our result is that we do not deeply consider the approximation ability of the policy class and its computation properties with stochastic policy gradient descent. 

\section{Bounds on Imitating Environments}

The task of environment learning is to recover the transition model of MDP, $M_\theta$, from data collected in the real environment $M^*$, and it is the core of model-based reinforcement learning (MBRL). Although environment learning is typically a separated topic with imitation learning, it has been noticed that learning environment transition model can also be treated by imitation learning \cite{VenkatramanHB15, Virtual-Taobao, Virtual-Didi}. In particular, a transition model takes the state $s$ and action $a$ as input and predicts the next state $s^\prime$, which can be considered as a dual agent, so that the imitation learning can be applied. The learned transition probability $M_\theta(s^\prime|s, a)$ is expected to be close to the true probability $M^*(s^\prime|s, a)$. Under the background of MBRL, we assess the quality of the learned transition model $M_\theta$ by the evaluation error of an arbitrary policy $\pi$, i.e., $| V_{\pi}^{M^*} - V_{\pi}^{M_\theta} |$, where $V_{\pi}^{M^*}$ is the true value and $V_{\pi}^{M_\theta}$ is the value in the learned transition model. Note that we focus on learning the transition model, while assuming the true reward function is always available\footnote{In the case where the reward function is unknown, we can directly learn the reward function with supervised learning and the corresponding sample complexity is a lower order term compared to the one of learning the transition model \citep{AzarMK13}.}. For simplicity, we only present the error bounds here and it's feasible to extend our results with the concentration measures to obtain finite sample complexity bounds.

\subsection{Imitating Environments with Behavioral Cloning}

Similarly, we can directly employ behavioral cloning to minimize the one-step prediction errors when imitating environments, which is formulated as the following optimization problem.
\begin{equation*}
\min_{\theta} \mathbb{E}_{(s, a) \sim \rho^{M^*}_{\pi_D}} \left[D_{\mathrm{KL}} \bigl( M^*(\cdot|s,a), M_\theta (\cdot|s, a) \bigr)\right] := \mathbb{E}_{(s, a) \sim \rho_{\pi_D}^{M^*}, s^\prime \sim M^*(\cdot|s, a)}\left[\log \frac{M^* (s^\prime |s, a)}{M_\theta (s^\prime|s, a)} \right],
\end{equation*}
where $\pi_D$ denotes the data-collecting policy and $\rho_{\pi_D}^{M^*}$ denotes its state-action distribution. We will see that the issue of compounding errors also exists in model-based policy evaluation. Intuitively, if the learned environment cannot capture the transition model globally, the policy evaluation error blows up regarding the model-bias, which degenerates the effectiveness of MBRL. In the following, we formally state this result for self-containing, though similar results have been appeared in \cite{slbo, mbpo}.

\begin{lemma}
\label{lemma:compounding_evaluation_loss}
Given a true MDP with transition model $M^*$, a data-collecting policy $\pi_D$, and a learned transition model $M_\theta$ with $\mathbb{E}_{(s, a) \sim \rho^{M^*}_{\pi_D}} \left[D_{\mathrm{KL}} \bigl( M^*(\cdot|s,a), M_\theta (\cdot|s, a) \bigr)\right] \leq \epsilon_\textit{m}$, for an arbitrary bounded divergence policy $\pi$, i.e., $\max_{s} D_{\textnormal{KL}} \bigl(\pi(\cdot | s), \pi_D(\cdot | s) \bigr) \leq \epsilon_\pi$, the policy evaluation error is bounded by $
    \vert V_{\pi}^{M^*} - V_{\pi}^{M_\theta} \vert \leq \frac{ \sqrt{2} R_{\textnormal{max}} \gamma}{(1-\gamma)^2} \sqrt{\epsilon_m} + \frac{2\sqrt{2} R_{\textnormal{max}}}{(1-\gamma)^2} \sqrt{\epsilon_\pi}$.
\end{lemma}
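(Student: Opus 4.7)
The plan is to split the evaluation gap into a pure model-bias piece and two policy-shift pieces by a triangle-inequality decomposition through the data-collecting policy:
\begin{equation*}
\bigl| V_{\pi}^{M^*} - V_{\pi}^{M_\theta} \bigr|
\;\le\; \bigl| V_{\pi}^{M^*} - V_{\pi_D}^{M^*} \bigr|
\;+\; \bigl| V_{\pi_D}^{M^*} - V_{\pi_D}^{M_\theta} \bigr|
\;+\; \bigl| V_{\pi_D}^{M_\theta} - V_{\pi}^{M_\theta} \bigr|.
\end{equation*}
This decomposition is essentially forced on us by the hypotheses: the model-bias assumption is stated under $\rho_{\pi_D}^{M^*}$, so the only term in which we can legitimately invoke $\epsilon_m$ is the middle one (same policy $\pi_D$, different dynamics), while the outer two terms (same dynamics, policies $\pi$ vs.\ $\pi_D$) are controlled by $\epsilon_\pi$. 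I would then show that the middle term contributes the $\sqrt{\epsilon_m}$ summand with the coefficient $\sqrt{2}R_{\max}\gamma/(1-\gamma)^2$, and each outer term contributes $\sqrt{2}R_{\max}/(1-\gamma)^2\cdot\sqrt{\epsilon_\pi}$, summing to the target $2\sqrt{2}R_{\max}/(1-\gamma)^2\cdot\sqrt{\epsilon_\pi}$.

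For the middle term I would use a one-step simulation-lemma identity: by a standard telescoping argument using $V_{\pi_D}^{M^*}(s)-V_{\pi_D}^{M_\theta}(s)$ and unrolling along the trajectories generated by $\pi_D$ in $M^*$, one obtains
\begin{equation*}
V_{\pi_D}^{M^*}-V_{\pi_D}^{M_\theta} \;=\; \frac{\gamma}{1-\gamma}\,\mathbb{E}_{(s,a)\sim \rho_{\pi_D}^{M^*}}\Bigl[\bigl\langle M^*(\cdot|s,a)-M_\theta(\cdot|s,a),\; V_{\pi_D}^{M_\theta}\bigr\rangle\Bigr].
\end{equation*}
Using $\|V_{\pi_D}^{M_\theta}\|_\infty \le R_{\max}/(1-\gamma)$ and the fact that the two transitions integrate to one, I bound the inner product by $\frac{2R_{\max}}{1-\gamma}$ times the total variation distance, apply Pinsker to pass from TV to $\sqrt{D_{\mathrm{KL}}/2}$, and then use Jensen's inequality to pull the square root outside the expectation. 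This yields exactly $\frac{\sqrt{2}R_{\max}\gamma}{(1-\gamma)^2}\sqrt{\epsilon_m}$.

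For each outer term I would invoke the performance-difference lemma under the fixed dynamics $M \in \{M^*, M_\theta\}$:
\begin{equation*}
V_\pi^M - V_{\pi_D}^M \;=\; \frac{1}{1-\gamma}\,\mathbb{E}_{s\sim d_\pi^M}\Bigl[\sum_{a}\bigl(\pi(a|s)-\pi_D(a|s)\bigr)\,Q_{\pi_D}^M(s,a)\Bigr],
\end{equation*}
bound the inner sum by $\frac{2R_{\max}}{1-\gamma}$ times $\tfrac12\|\pi(\cdot|s)-\pi_D(\cdot|s)\|_1$, and apply Pinsker with the \emph{uniform} bound $\max_s D_{\mathrm{KL}}(\pi(\cdot|s),\pi_D(\cdot|s))\le \epsilon_\pi$; since the bound is uniform in $s$, the expectation over $d_\pi^M$ is trivial. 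Each outer term then yields $\frac{\sqrt{2}R_{\max}}{(1-\gamma)^2}\sqrt{\epsilon_\pi}$, and summing the three pieces gives the claim.

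The main obstacle, conceptually, is just the mismatch of measures: the assumption bounds $D_{\mathrm{KL}}(M^*,M_\theta)$ only on samples from $\pi_D$, yet the quantity of interest involves the evaluation policy $\pi$, whose state-action distribution can differ arbitrarily from $\rho_{\pi_D}^{M^*}$. The three-term decomposition is precisely what routes around this, at the cost of a uniform (rather than expected) policy-divergence assumption, which is why the statement requires $\max_s D_{\mathrm{KL}}\le\epsilon_\pi$. Once this routing is fixed, everything else reduces to standard simulation-lemma, PDL, Pinsker, and Jensen manipulations, with the constants matching as above.
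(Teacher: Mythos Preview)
Your proposal is correct and follows essentially the same route as the paper: the identical three-term triangle decomposition through $\pi_D$, with the middle term contributing the $\sqrt{\epsilon_m}$ piece and the two outer terms contributing $\sqrt{\epsilon_\pi}$ each. The only differences are cosmetic: the paper bounds the middle term via a matrix identity for $d_{\pi_D}^{M_\theta}-d_{\pi_D}^{M^*}$ (its Lemma~\ref{lemma:compounding_error_without_policy_divergence}) rather than your simulation-lemma telescoping, and bounds the outer terms via the state--action distribution discrepancy (its Lemmas~\ref{lemma_bc_sa_con} and~\ref{lemma:value_error_based_state_action_discrepancy}) rather than your direct performance-difference-lemma argument, but both paths yield exactly the same constants.
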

Note that the policy evaluation error contains two terms, the inaccuracy of the learned model measured under the state-action distribution of the data-collecting policy $\pi_D$, and the policy divergence between $\pi$ and $\pi_D$. We realize that the $1/(1-\gamma)^2$ dependency on the policy divergence $\epsilon_\pi$ is inevitable (see also the Theorem 1 in TRPO \cite{TRPO}). Hence, we mainly focus on how to reduce the model-bias term.

\subsection{Imitating Environments with GAIL}
\label{section:generative_adversarial_model_imitation}

As shown in Lemma \ref{lemma_policy_gail_value}, GAIL mitigates the issue of compounding errors via matching the state-action distribution of expert policies. Inspired by this observation, we analyze the error bound of GAIL for environment-learning tasks. Concretely, we train a transition model (also as a policy) that takes state $s_t$ and action $a_t$ as inputs and outputs a distribution over next state $s_{t+1}$; at the meantime, we also train a discriminator that learns to recognize whether a state-action-next-state tuple $(s_t, a_t, s_{t+1})$ comes from the ``expert'' demonstrations, where the ``expert'' demonstrations should be explained as the transitions collected by running the data-collecting policy in the true environment. This procedure is summarized in Algorithm \ref{alg:envrionment-learning_gail} in Appendix \ref{appendix:subsection_environment-learning-with-GAIL}. It is easy to verify that all the occupancy measure properties of GAIL for imitating policies are reserved by 1) augmenting the action space into the new state space; 2) treating the next state space as the action space when imitating environments. In the following, we show that, by GAIL, the dependency on the effective horizon is only linear in the term of model-bias.

We denote $\mu^{M_\theta}$ as the state-action-next-state distribution of the data-collecting policy $\pi_D$ in the learned transition model, i.e.,  $\mu^{M_\theta}(s, a, s^\prime) = M_{\theta}(s^\prime|s,a) \rho_{\pi_D}^{M_\theta}(s, a)$; and $\mu^{M^*}$ as that in the true transition model. The proof of Theorem \ref{lemma:gail_evaluation_loss} can be found in Appendix \ref{sec:imitate_environment}. 

\begin{theorem}
\label{lemma:gail_evaluation_loss}
Given a true MDP with transition model $M^*$, a data-collecting policy $\pi_D$, and a learned transition model $M_\theta$ with $D_{\textnormal{JS}}(\mu^{M_\theta}, \mu^{M^*}) \leq \epsilon_{m}$, for an arbitrary bounded divergence policy $\pi$, i.e. $\max_{s} D_{\textnormal{KL}} \bigl( \pi(\cdot | s), \pi_{\textit{D}}(\cdot | s) \bigr) \leq \epsilon_\pi$, the policy evaluation error is bounded by $\vert V_{\pi}^{M_\theta} - V_{\pi}^{M^*} \vert \leq \frac{2 \sqrt{2} R_{\textnormal{max}}}{1-\gamma} \sqrt{\epsilon_m} + \frac{2 \sqrt{2} R_{\textnormal{max}}}{(1-\gamma)^2} \sqrt{\epsilon_\pi}$.
\end{theorem}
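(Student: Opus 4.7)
I would start from a triangle–inequality decomposition through the data-collecting policy $\pi_D$:
\begin{equation*}
|V_\pi^{M_\theta} - V_\pi^{M^*}| \leq |V_\pi^{M_\theta} - V_{\pi_D}^{M_\theta}| + |V_{\pi_D}^{M_\theta} - V_{\pi_D}^{M^*}| + |V_{\pi_D}^{M^*} - V_\pi^{M^*}|.
\end{equation*}
The middle term isolates the effect of the model bias while holding the policy fixed at $\pi_D$; it is the place where the GAIL assumption on the joint $\mu$ over $(s,a,s')$ should enter. The outer two terms isolate the effect of swapping $\pi_D$ for $\pi$ inside a fixed transition model and will be handled by essentially the same argument used in Theorem \ref{theorem_value_bc}.

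For the middle term, I would rewrite the value as $V_{\pi_D}^M = \tfrac{1}{1-\gamma}\mathbb{E}_{(s,a)\sim\rho_{\pi_D}^M}[r(s,a)]$, so that $|V_{\pi_D}^{M^*}-V_{\pi_D}^{M_\theta}| \leq \tfrac{2 R_{\max}}{1-\gamma}\, D_{\mathrm{TV}}(\rho_{\pi_D}^{M^*}, \rho_{\pi_D}^{M_\theta})$. Because $\rho_{\pi_D}^{M}(s,a) = \sum_{s'}\mu^{M}(s,a,s')$, the data-processing inequality for total variation yields $D_{\mathrm{TV}}(\rho_{\pi_D}^{M^*},\rho_{\pi_D}^{M_\theta}) \leq D_{\mathrm{TV}}(\mu^{M^*}, \mu^{M_\theta})$. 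Finally, the identity $D_{\mathrm{TV}}(P,(P+Q)/2) = \tfrac{1}{2}D_{\mathrm{TV}}(P,Q)$ plugged into each half of the JS divergence together with Pinsker's inequality gives $D_{\mathrm{TV}}(P,Q)^2 \leq 2\, D_{\mathrm{JS}}(P,Q)$, and the hypothesis $D_{\mathrm{JS}}(\mu^{M_\theta},\mu^{M^*})\leq \epsilon_m$ therefore bounds the middle term by $\tfrac{2\sqrt{2}\,R_{\max}}{1-\gamma}\sqrt{\epsilon_m}$, i.e.\ the first term of the claim with exactly the right constant.

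For each outer term I would use the performance-difference identity $V_\pi^M - V_{\pi_D}^M = \tfrac{1}{1-\gamma}\mathbb{E}_{s\sim d_\pi^M}\bigl[\sum_a (\pi(a|s)-\pi_D(a|s))\, Q_{\pi_D}^M(s,a)\bigr]$ combined with $|Q_{\pi_D}^M|\leq R_{\max}/(1-\gamma)$ and $\sum_a|\pi(a|s)-\pi_D(a|s)| = 2D_{\mathrm{TV}}(\pi(\cdot|s),\pi_D(\cdot|s))$ to get $|V_\pi^M - V_{\pi_D}^M|\leq \tfrac{2R_{\max}}{(1-\gamma)^2}\mathbb{E}_{s\sim d_\pi^M}[D_{\mathrm{TV}}(\pi,\pi_D)]$. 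Pinsker's inequality plus the hypothesis $\max_s D_{\mathrm{KL}}(\pi(\cdot|s),\pi_D(\cdot|s))\leq \epsilon_\pi$ then bounds each outer term by $\tfrac{\sqrt{2}\,R_{\max}}{(1-\gamma)^2}\sqrt{\epsilon_\pi}$, and summing the two outer terms gives the second summand in the claim. The main subtlety, in my view, is the middle term: it is essential that the JS hypothesis is stated on the joint $\mu$ over $(s,a,s')$, because only then is the data-processing step available and the $s'$-coordinate can be discarded with no extra factor of horizon. If one instead started from a per-conditional KL on $M^*(\cdot|s,a)$ versus $M_\theta(\cdot|s,a)$, the compounding-error argument behind Lemma \ref{lemma:compounding_evaluation_loss} reappears and the linear-in-horizon advantage of GAIL for environment imitation is lost.
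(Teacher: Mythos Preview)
Your proposal is correct and follows essentially the same route as the paper: a three-term decomposition through $\pi_D$, with the middle term handled by passing from $\rho_{\pi_D}$ to $\mu$ via data processing and then the TV--JS inequality $D_{\mathrm{TV}}^2 \le 2D_{\mathrm{JS}}$, and the two outer terms handled by a horizon-squared policy-divergence bound plus Pinsker. The only cosmetic difference is that the paper applies the triangle inequality at the level of $D_{\mathrm{TV}}(\rho_\pi^{M_\theta},\rho_\pi^{M^*})$ (after first invoking Lemma~\ref{lemma:value_error_based_state_action_discrepancy}) and bounds the outer TV terms via Lemma~\ref{lemma_bc_sa_con}, whereas you split the values directly and invoke the performance-difference lemma; the constants and structure are identical.
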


Theorem \ref{lemma:gail_evaluation_loss} suggests that recovering the environment transition with a GAIL-style learner can mitigate the model-bias when evaluating policies. We provide the experimental evidence in Section \ref{subsection:imitating_environments}. Combing this model-imitation technique with all kinds of policy optimization algorithms is an interesting direction that we will explore in the future.


\section{Experiments}

\subsection{Imitating Policies}
\label{section:experiments}

We evaluate imitation learning methods on three MuJoCo benchmark tasks in OpenAI Gym \cite{gym}, where the agent aims to mimic locomotion skills. We consider the following approaches: BC~\cite{BC}, DAgger~\cite{dagger}, GAIL~\citep{GAIL}, maximum entropy IRL algorithm AIRL~\cite{AIRL} and apprenticeship learning algorithms FEM~\cite{abbeel04} and GTAL~\cite{a_game_theoretic_approch_to_ap}. In particular, FEM and GTAL are based on the improved versions proposed in~\cite{Jonathan16}. Besides GAIL, we also involve WGAIL (see Appendix \ref{subsection:Wasserstein GAIL}) in the comparisons. We run the state-of-the-art algorithm SAC \cite{SAC} to obtain expert policies. All experiments run with $3$ random seeds. Experiment details are given in Appendix \ref{subsection:appendix_imitating_policies}. 

\textbf{Study of effective horizon dependency.} We firstly compare the methods with different effective planning horizons. All approaches are provided with only $3$ expert trajectories, except for DAgger that continues to query expert policies during training. When expert demonstrations are scanty, the impact of the scaling factor $1/(1-\gamma)$ could be significant. The relative performance (i.e., $V_{\pi} / V_{\piE}$) of learned policies under MDPs with different discount factors $\gamma$ is plotted in Figure \ref{fig:policy_value}. Note that the performance of expert policies increases as the planning horizon increases, thus the decrease trends of some curves do not imply the decrease trends of policy values. Exact results and learning curves are given in Appendix \ref{section:experiment_details}. Though some polices may occasionally outperform experts in short planning horizon settings, we care mostly whether a policy can match the performance of expert policies when the planning horizon increases. We can see that when the planning horizon increases, BC is worse than GAIL, and possibly AIRL, FEM and GTAL. The observation confirms our analysis. Consistent with \cite{GAIL}, we also have empirically observed that when BC is provided lots of expert demonstrations, the training error and generalization error could be very small. In that case, the discount scaling factor does not dominate and BC's performance is competitive.
\begin{figure}[htbp]\vspace{-1em}
    \centering
    \includegraphics[width=0.95\linewidth]{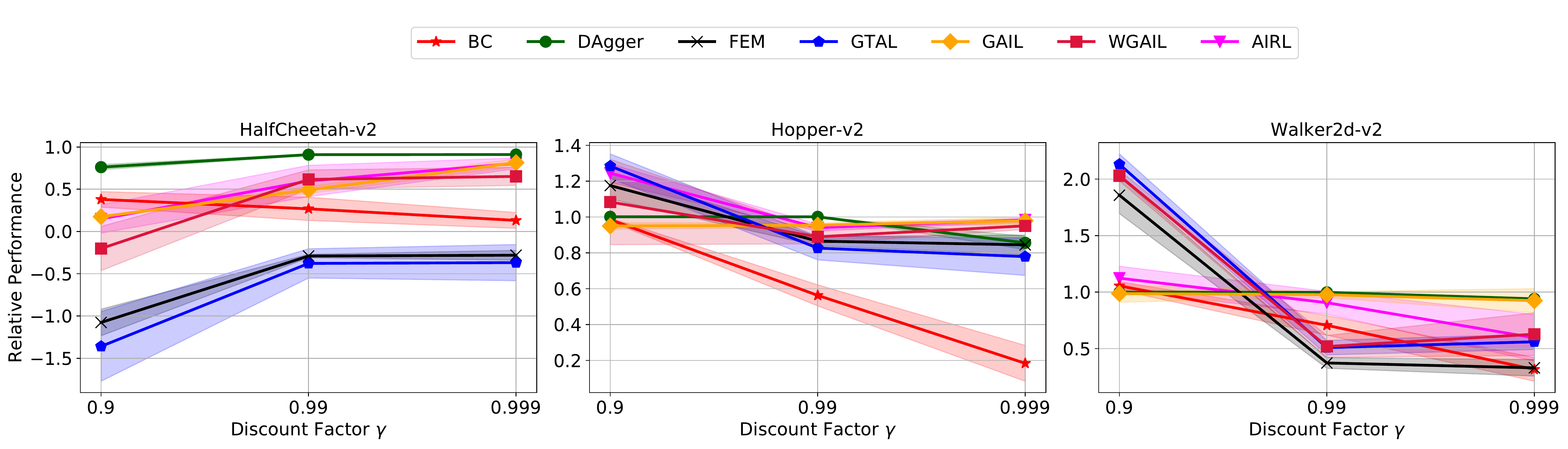}
    \caption{Relative performance of imitated policies under MDPs with different discount factors $\gamma$.
    }
    \label{fig:policy_value}\vspace{-1em}
\end{figure}
%


\begin{figure}[htbp]
\centering
\includegraphics[width=0.95\linewidth]{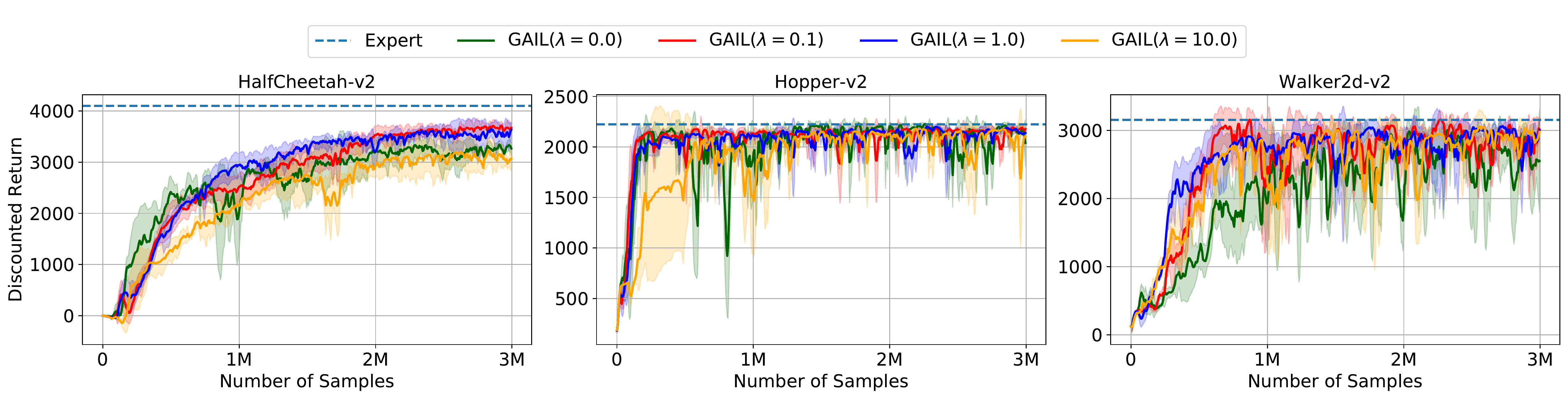}
\caption{Learning curves of GAIL ($\gamma=0.999$) with different gradient penalty coefficients $\lambda$.
}
\label{fig:gail_regular_l2}
\end{figure}
\textbf{Study of generalization ability of GAIL.} We then empirically validate the trade-off about the model complexity of the discriminator set in GAIL. We realize that neural networks used by the discriminator are often over-parameterized on MuJoCo tasks and find that carefully using the gradient penalty technique \cite{GP-WGAN} can control the model's complexity to obtain better generalization results. In particular, gradient penalty incurs a quadratic cost function to the gradient norm, which makes the discriminator set a preference to $1$-Lipschitz continuous functions. This loss function is multiplied by a coefficient of $\lambda$ and added to the original objective function. Learning curves of varying $\lambda$ are given in Figure \ref{fig:gail_regular_l2}. We can see that a moderate $\lambda$ (e.g., $0.1$ or $1.0$) yields better performance than a large $\lambda$ (e.g., $10$) or small $\lambda$ (e.g., $0$).

\subsection{Imitating Environments}
\label{subsection:imitating_environments}
We conduct experiments to verify that generative adversarial learning could mitigate the model-bias for imitating environments in the setting of model-based reinforcement learning. 
Here we only focus on the comparisons between GAIL and BC for environment-learning. Both methods are provided with $20$ trajectories to learn the transition model. Experiment details are given in Appendix \ref{subsection:appendix_imitating_environments}. We evaluate the performance by the policy evaluation error $\vert V_{\pi_{D}}^{M_\theta} - V_{\pi_{D}}^{M^*} \vert$, where $\pi_D$ denotes the data-collecting policy. As we can see in Figure \ref{fig:imitating_environment}, the policy evaluation errors are smaller on all three environments learned by GAIL. Note that BC tends to over-fit, thus the policy evaluation errors do not decrease on HalfCheetah-v2.

\begin{figure}[htbp]
\centering
\includegraphics[width=0.95\linewidth]{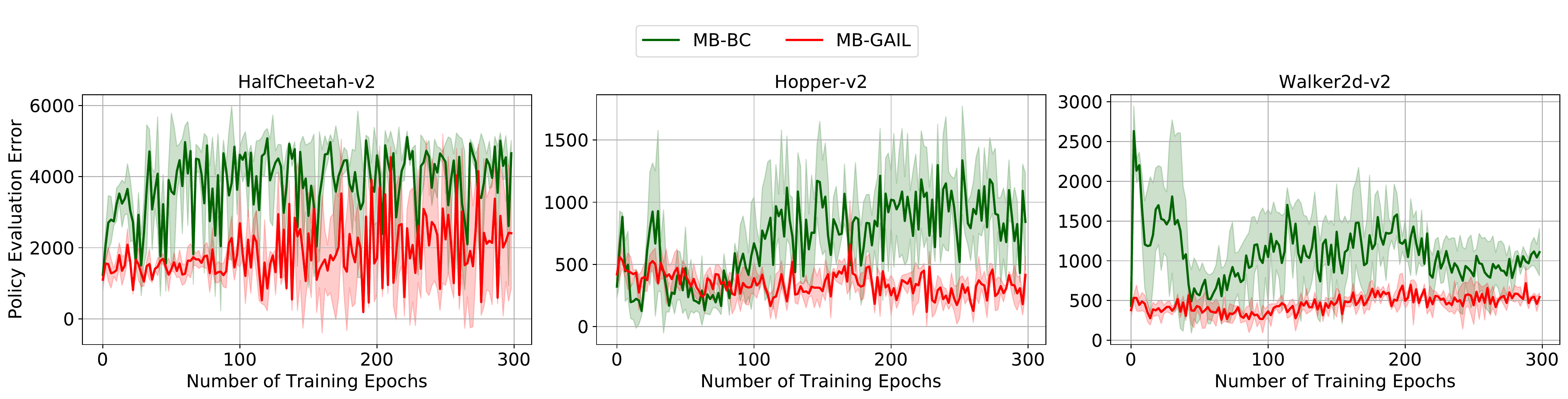}
\caption{Policy evaluation errors ($\gamma=0.999$) on environment models trained by BC and GAIL. 
}
\label{fig:imitating_environment}
\end{figure}

\section{Conclusion}
\label{section:conclustion}
This paper presents error bounds of BC and GAIL for imitating-policies and imitating-environments in the infinite horizon setting, mainly showing that GAIL can achieve a linear dependency on the effective horizon while BC has a quadratic dependency. The results can enhance our understanding of imitation learning methods. 

We would like to highlight that the result of the paper may shed some light for model-based reinforcement learning (MBRL). Previous MBRL methods mostly involve a BC-like transition learning component that can cause a high model-bias. Our analysis suggests that the BC-like transition learner can be replaced by a GAIL-style learner to improve the generalization ability, which also partially addresses the reason that why GAIL-style environment model learning approach in \cite{Virtual-Taobao, Virtual-Didi} can work well. Learning a useful environment model is an essential way towards sample-efficient reinforcement learning \cite{SampleEfficiency}, which is not only because the environment model can directly be used for cheap training, but it is also an important support for meta-reinforcement learning (e.g., \cite{ZhangIJCAI18}). We hope this work will inspire future research in this direction.

Our analysis of GAIL focused on the generalization ability of the discriminator and further analysis of the computation and approximation ability of the policy was left for future works.

\section*{Broader Impact}
This work focuses on the theoretical understanding about imitation learning methods in imitating policies and environments, which does not present any direct societal consequence. This work indicates possible improvement direction for MBRL, which might help reinforcement learning get better used in the real world. There could be some consequence when reinforcement learning is getting abused, such as manipulate information presentation to control people's behaviors. 

\begin{ack}
The authors would like to thank Dr. Weinan Zhang and Dr. Zongzhang Zhang for their helpful comments. This work is supported by National Key R\&D Program of China (2018AAA0101100), NSFC (61876077), and Collaborative Innovation Center of Novel Software Technology and Industrialization.
\end{ack}

\bibliographystyle{named}
\bibliography{neurips_2020}

\newpage
\appendix
\section{Analysis of Imitating-policies with BC}
\label{appendix_error_propagation}

Here, we present an error propagation analysis to derive the compounding errors of BC under the setting of infinite-horizon MDP. Our derivation is based on the framework of error-propagation (see Figure \ref{fig:error_propagation_framework}), which illustrates the cause of compounding errors. Note that the error-propagation framework focuses on the absolute value of policy value gap $\vert V_{\pi} - V_{\piE} \vert$, and the one side bound $V_{\pi} - V_{\piE}$ can be easily derived from it.

\begin{figure}[H]
    \centering
    \includegraphics[width=0.4\linewidth]{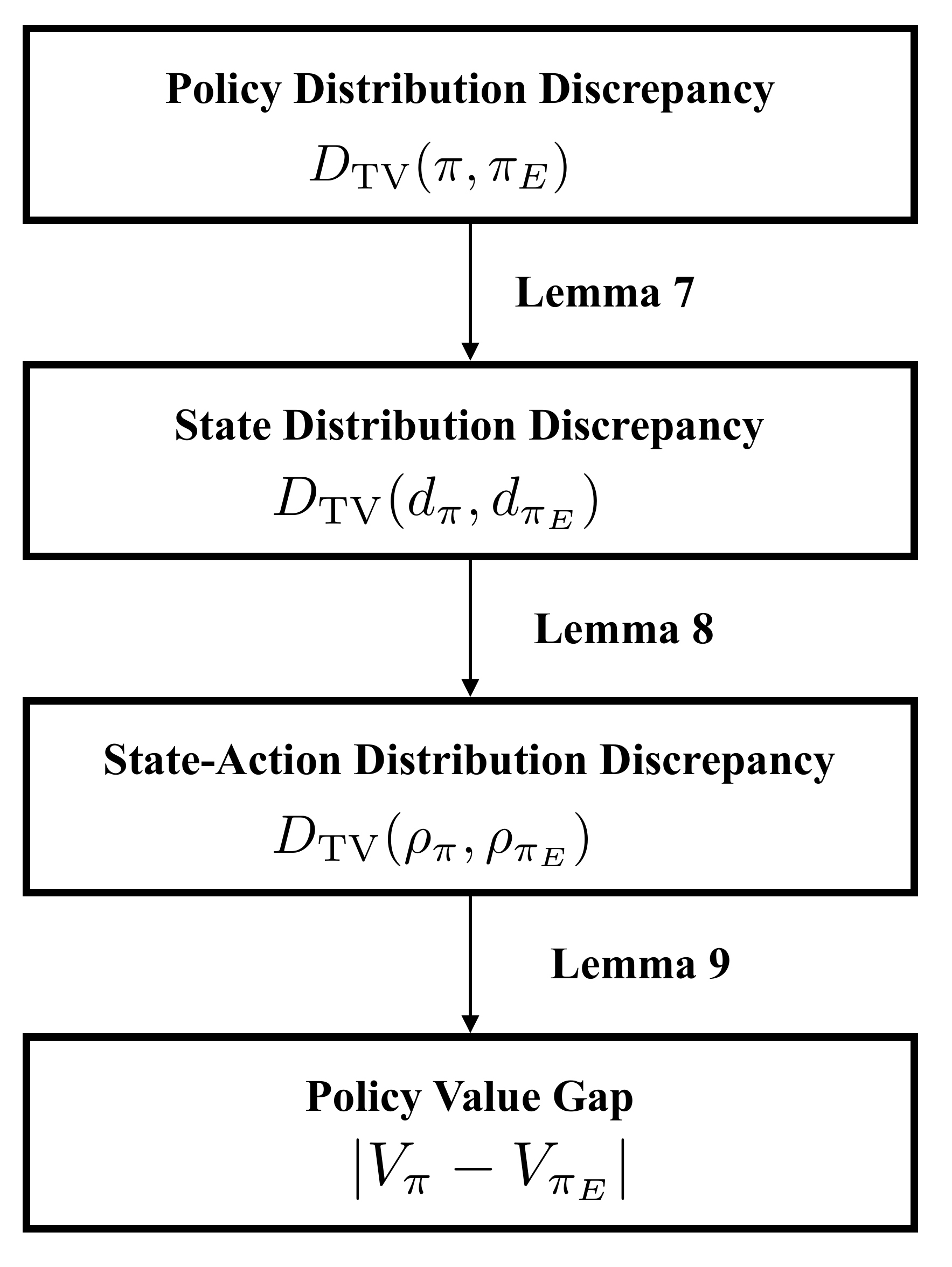}
    \caption{Error propagation of behavioral cloning.}
    \label{fig:error_propagation_framework}
\end{figure}

\subsection{Error-propagation Analysis}
We firstly introduce the following Lemma, which tells that how much state distribution discrepancy grows based on the policy distribution discrepancy.
\begin{lemma}\label{leamma_s_cond}
 For two policies $\pi$ and $\piE$, we have that 
\begin{equation*}
    D_{\mathrm{TV}}(d_{\pi}, d_{\piE}) \leq \frac{\gamma}{1-\gamma}\mathbb{E}_{s \sim d_{\piE}} \left[ D_{\mathrm{TV}} \big( \pi (\cdot| s), \piE(\cdot| s) \big) \right].
    \end{equation*}
\end{lemma}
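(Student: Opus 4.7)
The plan is to exploit the Bellman flow equation satisfied by $d_\pi$. Recall that, viewing distributions as column vectors in $\mathbb{R}^{|\mathcal{S}|}$ and letting $P_\pi$ be the state-to-state transition matrix induced by $\pi$, i.e.\ $P_\pi(s,s')=\sum_a \pi(a|s)M^*(s'|s,a)$, a short unrolling of the definition of the discounted state distribution gives $d_\pi = (1-\gamma)d_0 + \gamma P_\pi^\top d_\pi$. Writing the same identity for $\piE$ and subtracting yields
\begin{equation*}
(I-\gamma P_\pi^\top)(d_\pi - d_{\piE}) = \gamma\,(P_\pi^\top - P_{\piE}^\top)\,d_{\piE},
\end{equation*}
so $d_\pi - d_{\piE} = \gamma\,(I-\gamma P_\pi^\top)^{-1}(P_\pi^\top - P_{\piE}^\top)d_{\piE}$, where the inverse is well-defined as the Neumann series $\sum_{k\ge 0}\gamma^k (P_\pi^\top)^k$.

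Next I would take total variation distance, i.e.\ half the $\ell_1$ norm, of both sides. Since $P_\pi$ is row-stochastic, $P_\pi^\top$ preserves column sums, which means $(P_\pi^\top-P_{\piE}^\top)d_{\piE}$ is a signed vector summing to zero, and $P_\pi^\top$ acts as an $\ell_1$-contraction on signed measures (this is standard: $\|P_\pi^\top v\|_1 \le \|v\|_1$ for any $v$). Therefore the Neumann series gives
\begin{equation*}
\|(I-\gamma P_\pi^\top)^{-1}(P_\pi^\top - P_{\piE}^\top)d_{\piE}\|_1 \le \frac{1}{1-\gamma}\,\|(P_\pi^\top - P_{\piE}^\top)d_{\piE}\|_1,
\end{equation*}
reducing everything to a one-step bound.

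For that one-step bound I would expand the entry $((P_\pi^\top-P_{\piE}^\top)d_{\piE})(s)=\sum_{s',a'}d_{\piE}(s')(\pi(a'|s')-\piE(a'|s'))M^*(s|s',a')$, apply the triangle inequality to pull the absolute value inside, and then swap the outer sum over $s$ inside, using $\sum_s M^*(s|s',a')=1$ to kill the transition kernel. This leaves $\sum_{s'}d_{\piE}(s')\sum_{a'}|\pi(a'|s')-\piE(a'|s')| = 2\,\mathbb{E}_{s\sim d_{\piE}}[D_{\mathrm{TV}}(\pi(\cdot|s),\piE(\cdot|s))]$. Combining with the previous display and the leading factor $\gamma/2$ (from the $\tfrac12$ in the definition of TV and the $\gamma$ in front of the inverted operator) produces exactly the claimed $\tfrac{\gamma}{1-\gamma}$ constant.

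The only delicate step is justifying the $\ell_1$-contraction of the Neumann operator, which truly requires that we apply it to a zero-sum signed vector rather than to arbitrary inputs (otherwise no geometric summation would hold, since $\gamma P_\pi^\top$ has spectral radius $\gamma$ on zero-sum vectors but $1$ on the all-ones direction). Everything else is bookkeeping; no tightness or minimax argument is needed. An equivalent route would be to write $d_\pi(s)-d_{\piE}(s)=(1-\gamma)\sum_t \gamma^t(\Pr(s_t=s;\pi)-\Pr(s_t=s;\piE))$ and induct on $t$ using the standard one-step TV bound, but the operator-theoretic presentation above is cleaner and makes the zero-sum/contraction issue explicit.
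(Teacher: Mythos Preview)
Your proof is correct and follows essentially the same route as the paper: both arrive at the identity $d_\pi - d_{\piE} = \gamma\,(I-\gamma P_\pi^\top)^{-1}(P_\pi^\top - P_{\piE}^\top)\,d_{\piE}$ (the paper obtains it via the resolvent identity $M_\pi - M_{\piE} = \gamma M_\pi (P_\pi - P_{\piE}) M_{\piE}$, you via the Bellman-flow equation, but it is the same formula), then bound the operator by $1/(1-\gamma)$ through the Neumann series and finish with the identical one-step calculation. One small correction: your caveat about needing a zero-sum input is superfluous and the accompanying spectral claim is off---because $P_\pi$ is row-stochastic, $\|P_\pi^\top v\|_1 \le \|v\|_1$ holds for \emph{every} vector $v$ (it is just $\sum_{s'}|\sum_s P_\pi(s,s')v(s)|\le \sum_s |v(s)|\sum_{s'}P_\pi(s,s')=\|v\|_1$), so the $\ell_1$ operator norm of $(I-\gamma P_\pi^\top)^{-1}$ is at most $1/(1-\gamma)$ unconditionally, and the spectral radius of $\gamma P_\pi^\top$ is $\gamma$, not $1$.
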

\begin{proof}
The proof is based on the permutation theory presented in \cite{TRPO}. First, we show that
    \begin{align*}
     d_{\pi} &= (1-\gamma)\sum\nolimits_{t=0}^{\infty} \gamma^{t} \Pr(s_t=s|\pi, d_0) \\
          &=(1-\gamma)(I - \gamma P_{\pi})^{-1} d_{0},
    \end{align*}
    where $P_{\pi}(s^{\prime} | s) = \sum_{a \in A } M^* (s^\prime | s, a)\pi(a | s)$. Then we obtain that
    \begin{align}
        d_{\pi} - d_{\piE} &= (1-\gamma) [\left(I - \gamma P_{\pi}\right)^{-1} - \left(I - \gamma P_{\piE}\right)^{-1}] \; d_{0}
        \nonumber\\
        &= (1-\gamma)(M_{\pi} - M_{\piE}) \; d_{0} \label{d_pi},
    \end{align}
    where $M_{\pi} = \left(I - \gamma P_{\pi}\right)^{-1}$ and $M_{\piE} = \left(I - \gamma P_{\piE}\right)^{-1}.$
    For the term $M_{\pi} - M_{\piE}$, we obtain that
    \begin{equation}\label{m_pi}
        \begin{split}
        M_{\pi} - M_{\piE} &= M_{\pi} \left( M_{\piE}^{-1} - M_{\pi}^{-1} \right) M_{\piE}
        \\
        &= \gamma M_{\pi} (P_{\pi} - P_{\piE}) M_{\piE}.
    \end{split}
    \end{equation}
    Combining Eq. \eqref{d_pi} with Eq. \eqref{m_pi}, we have
    \begin{equation*}
        \begin{split}
            d_{\pi} - d_{\piE} &= (1-\gamma) \gamma M_{\pi} \left( P_{\pi} - P_{\piE} \right) M_{\piE} d_{0}\\
            &= \gamma M_{\pi}\left( P_{\pi} - P_{\piE} \right) d_{\piE}.
        \end{split}
    \end{equation*}
    Therefore, we obtain that
    \begin{align}
        D_{\mathrm{TV}}(d_{\pi}, d_{\piE}) &= \frac{\gamma}{2} \Vert M_{\pi} (P_{\pi} - P_{\piE}) d_{\piE} \Vert_{1} \nonumber\\
        &\leq \frac{\gamma}{2} \Vert M_{\pi} \Vert_{1} 
        \Vert (P_{\pi} - P_{\piE}) d_{\piE} \Vert_{1}. \label{bound_TV}
    \end{align}
    We can show that $M_{\pi}$ is bounded:
    \begin{equation*}\label{bound_M}
        \Vert M_{\pi} \Vert_{1} 
        = \Vert \sum_{t=0}^{\infty} \gamma^{t} P_{\pi}^{t} \Vert_{1}
        \leq \sum_{t=0}^{\infty} \gamma^{t} \Vert P_{\pi} \Vert_{1}^{t}
        \leq \sum_{t=0}^{\infty} \gamma^{t}
        =\frac{1}{1-\gamma}.
    \end{equation*}
    Consequently, we show that $\left\| (P_{\pi} - P_{\piE}) d_{\piE} \right\|_{1}$ is also bounded,
\begin{equation*}
    \begin{split}
    \left\| (P_{\pi} - P_{\piE}) d_{\piE} \right\|_{1}
    &\leq \sum_{s, s^{\prime}} \left|P_{\pi}(s^{\prime}| s) - P_{\piE}(s^{\prime}| s)  \right|d_{\piE}(s)
    \\
    &= \sum_{s, s^{\prime}} \left| \sum_{a} M^*(s^\prime| s,a) \big( \pi(a| s) - \piE(a| s)  \big) \right| d_{\piE}(s)
    \\
    &\leq \sum_{(s, a), s^{\prime}} M^*(s^\prime| s,a) \bigl| \pi(a| s) - \piE(a| s) \bigr| d_{\piE}(s)
    \\
    &= \sum_{s} d_{\piE}(s) \sum_{a} \bigl| \pi(a| s) - \piE(a| s) \bigr|
    \\
    &= 2 \mathbb{E}_{s \sim d_{\piE}}[D_{\mathrm{TV}} \bigl(\piE(\cdot| s),\pi(\cdot| s) \bigr)].
    \end{split}
\end{equation*}
Combining Eq. (\ref{bound_TV}) with the above two inequalities completes the proof.
\end{proof}

Next, we further bound the state-action distribution discrepancy based on the policy discrepancy. 
\begin{lemma}\label{lemma_bc_sa_con}
For any two policies $\pi$ and $\piE$, we have that 
\begin{equation*}
D_{\mathrm{TV}}(\rho_{\pi}, \rho_{\piE}) \leq \frac{1}{1-\gamma}\mathbb{E}_{s \sim d_{\piE}} \left[ D_{\mathrm{TV}} \bigl(\pi(\cdot| s), \piE(\cdot| s) \bigr) \right].
\end{equation*}
\end{lemma}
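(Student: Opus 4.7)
The plan is to reduce the state-action TV bound to the state TV bound already proved in Lemma \ref{leamma_s_cond}, plus a ``pointwise'' policy discrepancy term. The key identity is the factorization $\rho_\pi(s,a) = d_\pi(s)\pi(a|s)$, which lets me use the standard add-and-subtract trick in the numerator.

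First I would write
\begin{equation*}
\rho_\pi(s,a) - \rho_{\piE}(s,a) = \bigl(d_\pi(s) - d_{\piE}(s)\bigr)\pi(a|s) + d_{\piE}(s)\bigl(\pi(a|s) - \piE(a|s)\bigr),
\end{equation*}
apply the triangle inequality, and sum absolute values over $(s,a)$. The first piece collapses (after summing $a$ first and using $\sum_a \pi(a|s) = 1$) to $\sum_s |d_\pi(s) - d_{\piE}(s)| = 2 D_{\mathrm{TV}}(d_\pi, d_{\piE})$, and the second piece is exactly $2 \mathbb{E}_{s \sim d_{\piE}}[D_{\mathrm{TV}}(\pi(\cdot|s), \piE(\cdot|s))]$ by definition of TV.

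Dividing by $2$ yields
\begin{equation*}
D_{\mathrm{TV}}(\rho_\pi, \rho_{\piE}) \leq D_{\mathrm{TV}}(d_\pi, d_{\piE}) + \mathbb{E}_{s \sim d_{\piE}}\bigl[D_{\mathrm{TV}}\bigl(\pi(\cdot|s), \piE(\cdot|s)\bigr)\bigr].
\end{equation*}
Then I would plug in Lemma \ref{leamma_s_cond} for the first summand to get the coefficient $\gamma/(1-\gamma) + 1 = 1/(1-\gamma)$ in front of the expected policy TV, completing the proof.

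There is essentially no obstacle here: everything reduces to the algebraic identity above plus Lemma \ref{leamma_s_cond}. The only thing to be slightly careful about is the order of summation when reducing the first piece (summing over actions first to use $\sum_a \pi(a|s)=1$), and making sure the $1/2$ factors in the TV definition track cleanly; both are routine.
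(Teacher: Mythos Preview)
Your proposal is correct and matches the paper's proof essentially line for line: the same add-and-subtract decomposition via $\rho_\pi(s,a)=d_\pi(s)\pi(a|s)$, the same triangle inequality split into a state-distribution term and an expected policy-TV term, and the same application of Lemma~\ref{leamma_s_cond} to combine $\gamma/(1-\gamma)+1=1/(1-\gamma)$.
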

\begin{proof}
Note that the relationship $\rho_\pi(s, a) = \pi(a | s) d_\pi(s)$ for any policy $\pi$, we have
\begin{equation*}\label{equation_joint_1}
    \begin{aligned}
        & \qquad D_{\mathrm{TV}}(\rho_{\pi}, \rho_{\piE}) 
        \\
        &= \frac{1}{2} \sum_{(s, a)} \left| \bigl[\piE(a| s) - \pi(a| s)\bigr] d_{\piE}(s)  +\bigl[d_{\piE}(s) - d_{\pi}(s) \bigr]\pi(a| s) \right|
        \\
        &\leq \frac{1}{2} \sum_{(s, a)} \bigl| \piE(a| s) - \pi(a| s) \bigr|d_{\piE}(s)
        + \frac{1}{2} \sum_{(s, a)}\pi(a| s) \bigl| d_{\piE}(s) - d_{\pi}(s) \bigr|
        \\
        &= \mathbb{E}_{s \sim d_{\piE} } [D_{\mathrm{TV}} \bigl( \pi(\cdot| s) , \piE(\cdot| s) \bigr)] + D_{\mathrm{TV}}( d_{\pi}, d_{\piE})
        \\
        &\leq \frac{1}{1-\gamma}\mathbb{E}_{s \sim d_{\piE}} \bigl[ D_{\mathrm{TV}} \bigl(\pi(\cdot| s), \piE(\cdot| s) \bigr) \bigr],
    \end{aligned}
\end{equation*}
where the last inequality follows Lemma \ref{leamma_s_cond}.
\end{proof}
Finally, we bound the policy value gap (i.e., the difference between value of learned policy $\pi$ and the expert policy $\piE$) based on the state-action distribution discrepancy.
\begin{lemma}
\label{lemma:value_error_based_state_action_discrepancy}
For any two policies $\pi$ and $\piE$, we have that
$$
|V_{\pi} - V_{\piE}| \leq \frac{2R_{\max}}{1-\gamma} D_{\mathrm{TV}}\left(\rho_\pi, \rho_{\piE} \right).
$$
\end{lemma}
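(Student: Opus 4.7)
The plan is to rewrite the policy value $V_\pi$ as a linear functional of the state-action occupancy $\rho_\pi$, so that the gap $V_\pi - V_{\piE}$ becomes an expectation of $r$ against the signed measure $\rho_\pi - \rho_{\piE}$, which is then controlled by $R_{\max}$ times the $\ell_1$ distance between the two occupancies. Concretely, I will prove the lemma in three short steps: (i) derive the occupancy representation of $V_\pi$; (ii) take the difference and apply a triangle/Hölder-type bound; (iii) convert the $\ell_1$ distance into $2 D_{\mathrm{TV}}$.

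For step (i), using the definition $\rho_\pi(s,a) = (1-\gamma)\sum_{t \ge 0} \gamma^t \Pr(s_t = s, a_t = a; \pi)$ and Fubini, one directly obtains the standard identity
\begin{equation*}
V_\pi \;=\; \mathbb{E}\!\left[\sum_{t=0}^\infty \gamma^t r(s_t,a_t)\right] \;=\; \frac{1}{1-\gamma}\sum_{s,a} \rho_\pi(s,a)\, r(s,a).
\end{equation*}
This identity is the workhorse; the factor $1/(1-\gamma)$ exactly compensates the normalization baked into $\rho_\pi$.

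For step (ii) and (iii), I subtract the analogous expression for $V_{\piE}$ and bound
\begin{equation*}
|V_\pi - V_{\piE}| \;=\; \frac{1}{1-\gamma}\left|\sum_{s,a} \bigl[\rho_\pi(s,a) - \rho_{\piE}(s,a)\bigr] r(s,a)\right| \;\le\; \frac{R_{\max}}{1-\gamma}\sum_{s,a} \bigl|\rho_\pi(s,a) - \rho_{\piE}(s,a)\bigr|,
\end{equation*}
using $|r(s,a)| \le R_{\max}$. Finally, recognizing that $\sum_{s,a}|\rho_\pi(s,a)-\rho_{\piE}(s,a)| = 2\, D_{\mathrm{TV}}(\rho_\pi, \rho_{\piE})$ yields the stated bound.

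There is no real obstacle here: the lemma is essentially a change-of-measure identity combined with a boundedness argument, and no recursion or error-propagation is needed. The only mildly technical point is justifying the interchange of sum and expectation in step (i), which is immediate from Tonelli since all terms are nonnegative after factoring $|r| \le R_{\max}$, or alternatively from absolute convergence of the discounted series.
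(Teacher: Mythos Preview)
Your proposal is correct and matches the paper's own proof essentially line for line: the paper also invokes the identity $V_\pi = \frac{1}{1-\gamma}\mathbb{E}_{(s,a)\sim\rho_\pi}[r(s,a)]$, subtracts, bounds $|r|\le R_{\max}$, and identifies the remaining $\ell_1$ sum with $2D_{\mathrm{TV}}(\rho_\pi,\rho_{\piE})$. Your additional remark about justifying the sum--expectation interchange via Tonelli is a harmless refinement the paper simply takes for granted.
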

\begin{proof}
It is a well-known fact that for any policy $\pi$,
its policy value can be reformulated as $V^\pi = \frac{1}{1-\gamma} \mathbb{E}_{(s,a) \sim \rho_\pi}[r(s, a)]$ \cite{puterman2014markov}. Based on this observation, we derive that 
\begin{equation*}
    \begin{split}
        \vert V_{\pi} - V_{\piE} \vert &= \left| \frac{1}{1-\gamma} \mathbb{E}_{(s,a) \sim \rho_{\pi}} [r(s,a)] - \frac{1}{1-\gamma} \mathbb{E}_{(s,a) \sim \rho_{\piE}} [r(s,a)] \right|
        \\
        &\leq \frac{1}{1-\gamma} \sum_{(s,a) \in \mathcal{S} \times \mathcal{A}} \left| \bigl( \rho_{\pi}(s,a) - \rho_{\piE}(s,a) \bigr) r(s,a) \right|
        \\
        &\leq \frac{2 R_{\textnormal{max}}}{1-\gamma} D_{\textnormal{TV}} (\rho_\pi, \rho_{\piE}).
    \end{split}
\end{equation*}
\end{proof}

\subsection{Proof of Theorem \ref{theorem_value_bc}}
\label{subsection:proof_of_theorem1}
\begin{proof}[Proof of Theorem \ref{theorem_value_bc}]
Suppose that the imitated policy $\piBC$ optimizes the objective of BC up to an $\epsilon$ error, i.e., $\mathbb{E}_{s \sim d_{\piE}}\left[ D_{\mathrm{KL}}\big(\piBC(\cdot|s), \piE(\cdot|s) \big) \right] \leq \epsilon$.  
    Combining Lemma \ref{lemma_bc_sa_con} and Lemma \ref{lemma:value_error_based_state_action_discrepancy}, we have that, for policy $\piBC$ and $\piE$,
    \begin{equation*}
        \begin{split}
             V_{\piE} - V_{\piBC} &\leq \frac{2 R_{\mathrm{max}}}{1-\gamma} D_{\mathrm{TV}} (\rho_{\piBC}, \rho_{\piE})
            \\
            &\leq
            \frac{2 R_{\max}}{(1-\gamma)^2} \mathbb{E}_{s \sim d_{\piE}}\left[ D_{\mathrm{TV}}\big(\piBC(\cdot|s), \piE(\cdot|s) \big) \right].
        \end{split}
    \end{equation*}
    Thanks to Pinsker's inequality \cite{information_theory} that for two arbitrary distributions $\mu$ and $\nu$,  $D_{\mathrm{TV}} (\mu, \nu) \leq \sqrt{2 D_{\mathrm{KL}} (\mu, \nu)}$, we obtain that
    \begin{equation*}
    \begin{split}
         V_{\piE} - V_{\piBC} &\leq \frac{2 R_{\max}}{(1-\gamma)^2} \mathbb{E}_{s \sim d_{\piE}}\left[ \sqrt{ 2 D_{\mathrm{KL}}\big(\piBC(\cdot|s), \piE(\cdot|s) \big) } \right] 
        \\
        &\leq \frac{2 \sqrt{2} R_{\max}}{(1-\gamma)^2} \sqrt{ \mathbb{E}_{s \sim d_{\piE}}\left[ D_{\mathrm{KL}}\big(\piBC(\cdot|s), \piE(\cdot|s) \big) \right] }
        \\
        &\leq \frac{2 \sqrt{2} R_{\max}}{(1-\gamma)^2} \sqrt{\epsilon_\pi},
    \end{split}
    \end{equation*}
    where the penultimate inequality follows Jensen's inequality $\phi (\mathbb{E} [X]) \leq \mathbb{E} [\phi (X)]$, where $\phi(x) = - \sqrt{x}$.
\end{proof}

Based on Theorem \ref{theorem_value_bc}, we provide a sample complexity analysis of BC using classical learning theory.

\begin{proof}[Proof of Corollary \ref{limited_sample_BC}]
From Lemma \ref{lemma_bc_sa_con} and Lemma \ref{lemma:value_error_based_state_action_discrepancy}, we obtain that
\begin{equation}
\label{equation:corollary_proof}
    V_{\piE} - V_{\piBC} \leq \frac{2 R_{\max}}{(1-\gamma)^2} \mathbb{E}_{s \sim d_{\piE}}\left[ D_{\mathrm{TV}}\big(\piBC(\cdot|s), \piE(\cdot|s) \big) \right]. 
\end{equation}
Here we consider that $\piBC$ and $\piE$ are deterministic policies, thus we obtain that
\begin{equation*}
    \mathbb{E}_{s \sim d_{\piE}} [D_{\mathrm{TV}} (\pi(\cdot|s), \piE(\cdot|s))] = \mathbb{E}_{s \sim d_{\piE}} [ \mathbb{I}(\pi(s) \not= \piE(s))],
\end{equation*}
where $\mathbb{I}$ is the indicator function. The policy $\piBC$ is obtained by solving Eq.(\ref{equation:bc_policy}), thus $\piBC \bigl( s_{\piE}^{(i)} \bigr) = a_{\piE}^{(i)},  \forall i \in \{1, \cdots, m\}$. Since behavioral cloning employs supervised learning to learn a policy, we follow the standard argument in the classical learning theory \cite{FML} in the remaining proof. We define the expected risk $L(\pi) = \mathbb{E}_{s \sim d_{\piE}} [\mathbb{I} (\pi(s) \not= \piE(s))]$ and the empirical risk $L_m(\pi) = \frac{1}{m} \sum_{i=1}^{m} \mathbb{I} (\pi(s_{\piE}^{(i)}) \not= a_{\piE}^{(i)})$. For a fixed $\epsilon > 0$, we define the bad policy class $\Pi_{\mathrm{B}} = \{ \pi \in \Pi: L(\pi) > \epsilon \}$. Then we bound the probability of policy $\piBC$ belongs to the bad policy class $\Pi_{\mathrm{B}}$:
\begin{equation*}
    \Pr (L(\piBC) > \epsilon) = \Pr (\piBC \in \Pi_{\mathrm{B}}).
\end{equation*}
Because the empirical risk of $\piBC$ equals zero, we get that
\begin{equation*}
    \Pr (\piBC \in \Pi_{\mathrm{B}}) \leq \Pr (\exists \pi \in \Pi, L_m(\pi) = 0).
\end{equation*}
For a fixed $\pi \in \Pi$, $\Pr (L_{m} (\pi) = 0) = (1 - L(\pi))^{m} \leq (1-\epsilon)^m \leq e^{- \epsilon m}$, where the last step follows $1-a \leq e^{-a}$. Then we obtain that
\begin{equation*}
     \Pr (L(\piBC) > \epsilon) \leq \Pr (\exists \pi \in \Pi, L_m(\pi) = 0) \leq \sum_{\pi \in \Pi_{\mathrm{B}} } \Pr (L_m(\pi) = 0) \leq \vert \Pi \vert e^{- \epsilon m}.
\end{equation*}
Setting the right-hand side to be equal to $\delta$, we get that
$L(\piBC) \leq \frac{1}{m} \bigl( \log (\vert \Pi \vert) + \log (\frac{1}{\delta}) \bigr)$. Combining it with Eq. (\ref{equation:corollary_proof}) completes the proof.
\end{proof}

\subsection{Tightness of  Theorem 1}

\label{subsection:tightness_of_theorem_1}
\begin{figure}[htbp]
\begin{center}
\resizebox{.5\textwidth}{!}{
\begin{tikzpicture}[auto,node distance=10mm,>=latex,font=\normalsize]
\tikzstyle{round}=[thick,draw=black,circle]
\node[round] (s0) {$s_0$};
\node[round, right=0mm and 15mm of s0] (s1) {$s_1$};
\node[round, left=0mm and 15mm of s0] (s2) {$s_2$};

\draw[->] (s0) --  node [above] {$0$} node [below] {$a_1$} (s1);
\draw[->] (s0) --  node [above] {$0$} node [below] {$a_2$} (s2);

\draw[->] (s1) [out=20,in=70,loop] to node [near start, anchor=east] (m1) {+1} (s1);
\draw[->] (s2) [out=160,in=110,loop] to node [near start, anchor=west] (m2) {-1} (s2);
\end{tikzpicture}
}
\end{center}
\caption{A ``hard'' deterministic MDP corresponding to Theorem \ref{theorem_value_bc}. Digits on arrows are corresponding rewards. Initial state is $s_0$ while $s_1$ and $s_2$ are two absorbing states.}
\label{figure:simple_mdp}
\end{figure}
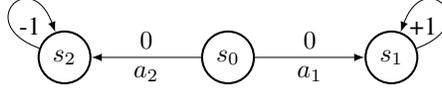

{
Here we validate that the $\gamma$-dependence in Theorem \ref{theorem_value_bc} is tight by the simple example in Figure \ref{figure:simple_mdp}. Note that the initial state is $s_0$ and two absorbing states are $s_1$ and $s_2$. That is, the agent always starts with $s_0$ and takes an action $a_1$ ($a_2$); consequently, the system transits into the absorbing state $s_1$ ($s_2$). Here we consider a sub-optimal expert policy $\piE$ that chooses $a_1$ with probability of $0.9$ and chooses $a_2$ with probability of $0.1$ at $s_0$, meaning that $\piE(a_1 | s_0)= 0.9, \; \piE(a_2 | s_0) = 0.1$ and we can show that the policy value of expert policy $\piE$ is $V_{\piE} = \frac{4 \gamma}{5(1-\gamma)}$. In addition, we can show that the state distribution of expert policy $\piE$ is $d_{\piE} = ( d_{\piE}(s_0), d_{\piE}(s_1), d_{\piE}(s_2)) = (1-\gamma, \frac{9}{10}\gamma, \frac{1}{10} \gamma)$. Consider a policy obtained by behavioral cloning $\piBC$ that chooses $a_1$ at $s_0$ with probability of $0.85$ and $a_2$ with probability of $0.15$, meaning that $\piBC (a_1|s_0) = 0.85, \piBC (a_2|s_0) = 0.15$. Similarly, we can show that $V_{\piBC} = \frac{7 \gamma}{10 (1-\gamma)}$ and the policy value gap $V_{\piE} - V_{\piBC} = \frac{\gamma}{10 (1-\gamma)}$. It is easy to verify that the error bound $\mathbb{E}_{s \sim d_{\piE}}\left[ D_{\mathrm{KL}}\bigl(\piE(\cdot|s), \pi(\cdot|s)\bigr) \right] $ on the RHS of Eq. (\ref{equation:bc_policy}) is about $ 0.011 (1-\gamma) $ and consequently $V_{\piE} - V_{\piBC} = C \cdot \frac{1}{(1-\gamma)^2} \mathbb{E}_{s \sim d_{\piE}}\left[ D_{\mathrm{KL}}\bigl(\piE(\cdot|s), \pi(\cdot|s)\bigr) \right]$, where $C$ is a constant. The equality implies that in the worst case, the quadratic discount complexity is tight in Theorem \ref{theorem_value_bc}.
}

\section{Analysis of Imitating-policies with GAIL}

\subsection{$f$-divergence}
\label{subsection:f-divergence GAIL}

A large class of divergence measures called $f$-divergence \cite{f-divergence} can be applied to depict the difference between two probability distributions. Given two probability density function $\mu$ and $\nu$ with respect to a base measure defined on the domain $\mathcal{X}$, $f$-divergence is defined as
\begin{equation*}
    D_{f} (\mu, \nu) = \int_{\mathcal{X}} \mu(x) f(\frac{\mu(x)}{\nu(x)}) dx, 
\end{equation*}
where $f(\cdot)$ is a convex function that satisfies $f(1)=0$. Different choices of $f$ decides specific measures. When $f(u) = -(u+1) \log(\frac{1+u}{2}) + u \log(u)$, $f$-divergence recovers the JS divergence used in GAIL. Table \ref{table:f_divergence} lists many of the common $f$-divergences and the $f$ functions to which they correspond (see also \cite{f-gan}). In the following, we provide a proof of Lemma \ref{lemma_policy_gail_value}. The proof is based on the concentration between different $f$-divergences. 
\begin{table}[htbp]
\caption{List of $f$-divergences}
\label{table:f_divergence}
\centering
\resizebox{\textwidth}{!}{%
\begin{tabular}{@{}lll@{}}
\toprule
Name                           & $D_{f}(\mu, \nu)$ & $f(u)$ \\ \midrule
Kullback-Leibler                     & $\int \mu(x) \log (\frac{\mu (x)}{\nu (x)}) dx$  & $u \log(u)$   \\
Reverse KL                     & $\int \nu(x) \log (\frac{\nu (x)}{\mu (x)}) dx$   &  $- \log(u)$ \\
Pearsion $\chi^2$      & $\int \frac{(\mu (x) - \nu (x))^2}{\mu (x)} dx$  & $(u-1)^2$  \\
Jensen-Shannon       & $\frac{1}{2} \int \mu(x) \log (\frac{2 \mu(x)}{\mu(x) + \nu(x)}) + \nu(x) \log (\frac{2 \nu(x)}{\mu(x) + \nu(x)}) dx$    & $-(u+1) \log(\frac{u+1}{2}) + u \log(u)$ \\
Squared Hellinger                             & $\int (\sqrt{\mu (x)} - \sqrt{\nu (x)})^2 dx $ & $(\sqrt{u} - 1)^2$ \\ \bottomrule
\end{tabular}%
}
\end{table}

\subsection{Proof of Lemma \ref{lemma_policy_gail_value}}
\begin{proof}[Proof of Lemma \ref{lemma_policy_gail_value}]
Here we prove that GAIL with $f$-divergence listed in Table \ref{table:f_divergence} enjoys a linear policy value gap.
Derived from Lemma \ref{lemma:value_error_based_state_action_discrepancy}, we obtain that
\begin{equation}\label{equation_value_rho}
    V_{\piE} - V_{\pi} \leq \frac{2 R_{\textnormal{max}}}{1-\gamma} D_{\textnormal{TV}} (\rho_\pi, \rho_{\piE}).
\end{equation}
\textbf{JS divergence:}
\\
In the following, we connect the total variation with the JS divergence based on Pinsker's inequality, 
\begin{equation}\label{equation_connection_JS_TV}
    \begin{split}
        D_{\mathrm{JS}} \big( \rho_{\piGA}, \rho_{\piE} \big)
        &= \frac{1}{2} \left( D_{\mathrm{KL}} \big(\rho_{\piGA}, \frac{\rho_{\piGA} + \rho_{\piE}}{2} \big) + D_{\mathrm{KL}} \big(\rho_{\piE}, \frac{\rho_{\piGA} + \rho_{\piE}}{2} \big) \right)
        \\
        &\geq D_{\mathrm{TV}}^{2} \big( \rho_{\piGA}, \frac{\rho_{\piGA} + \rho_{\piE}}{2} \big) +
        D_{\mathrm{TV}}^{2} \big( \rho_{\piE}, \frac{\rho_{\piGA} + \rho_{\piE}}{2} \big)
        \\
        &= \frac{1}{2} D_{\mathrm{TV}}^{2} ( \rho_{\piGA}, \rho_{\piE} ).
    \end{split}
\end{equation}
Combining Eq. (\ref{equation_value_rho}) with Eq. (\ref{equation_connection_JS_TV}), we get that
\begin{equation*}
    V_{\piE} - V_{\piGA} \leq \frac{2 \sqrt{2} R_{\textnormal{max}}}{1-\gamma}\sqrt{ D_{\textnormal{JS}} (\rho_{\piGA}, \rho_{\piE})}.
\end{equation*}
\textbf{KL divergence \& Reverse KL divergence:}
\\
Again, thanks to Pinsker's inequality, we obtain that the policy value gap is bounded by KL divergence and Reverse KL divergence.
\begin{equation*}
\begin{split}
         V_{\piE} - V_{\piGA} &\leq \frac{ \sqrt{2} R_\textnormal{max}}{1-\gamma} \sqrt{D_{\textnormal{KL}}(\rho_{\piGA}, \rho_{\piE})}.
        \\
        V_{\piE} - V_{\piGA} &\leq \frac{ \sqrt{2} R_\textnormal{max}}{1-\gamma} \sqrt{D_{\textnormal{KL}}(\rho_{\piE}, \rho_{\piGA})}. 
\end{split}
\end{equation*}

For $\chi^2$ divergence and Squared Hellinger divergence, we can build similar upper bounds of policy value gap.
\begin{equation*}
    \begin{split}
        V_{\piE} - V_{\piGA} &\leq \frac{R_{\textnormal{max}}}{1-\gamma} \sqrt{\chi^2 (\rho_{\piGA}, \rho_{\piE})}
        \\
        V_{\piE} - V_{\piGA} &\leq \frac{2 R_{\textnormal{max}}}{1-\gamma} \sqrt{D_{\textnormal{H}} (\rho_{\piGA}, \rho_{\piE})}.
    \end{split}
\end{equation*}

In conclusion, for policy $\piGA$ imitated by GAIL with $f$-divergence listed in Table \ref{table:f_divergence}, we have that 
$
     V_{\piE} - V_{\piGA} \leq \mathcal{O} \left(  \frac{1}{1-\gamma} \sqrt{D_{f} \left( \rho_{\piGA}, \rho_{\piE} \right)} \right)
$, which finishes the proof.
\end{proof}

\subsection{Proof of Lemma \ref{lemma_gail_neural_distance}}
\label{appendix_proof_lemma_gail_neural_distance}
\begin{proof}[Proof of Lemma \ref{lemma_gail_neural_distance}]
When the policy $\piGA$ optimizes the empirical GAIL loss $d_{\mathcal{D}}(\hat{\rho}_{\piE}, \hat{\rho}_{\pi})$ up to an $\epsilon_{\textnormal{opt}}$ error, we have that
\begin{equation}\label{equation_gail_loss}
    d_{\mathcal{D}}(\hat{\rho}_{\piE}, \hat{\rho}_{\piGA}) \leq \underset{\pi \in \Pi}{\inf} d_{\mathcal{D}}(\hat{\rho}_{\piE}, \hat{\rho}_{\pi}) + \epsilon_{\textnormal{opt}},
\end{equation}
where $\hat{\rho}_{\piE}$ denotes the expert demonstrations with $m$ state-action pairs $\{ (s^{(i)}_{\piE}, a^{(i)}_{\piE}) \}_{i=1}^{m}$ and $\hat{\rho}_{\piGA}$ is the empirical version of population distribution $\rho_{\piGA}$ with $m$ samples $\{ (s^{(i)}_{\piGA}, a^{(i)}_{\piGA}) \}_{i=1}^{m}$ collected by $\piGA$.
By standard derivation, we get that
\begin{equation}
\label{equation_triangle_inequality}
    d_{\mathcal{D}} (\rho_{\piE}, \rho_{\piGA}) \leq d_{\mathcal{D}} (\rho_{\piE}, \rho_{\piGA}) - d_{\mathcal{D}} (\hat{\rho}_{\piE}, \hat{\rho}_{\piGA}) + \underset{\pi \in \Pi}{\inf} d_{\mathcal{D}}(\hat{\rho}_{\piE}, \hat{\rho}_{\pi}) + \epsilon_{\textnormal{opt}}.
\end{equation}
According to the definition of neural network distance $d_{\mathcal{D}}(\mu, \nu)$, we prove that $ d_{\mathcal{D}}(\rho_{\piE}, \rho_{\piGA}) - d_{\mathcal{D}}(\hat{\rho}_{\piE}, \hat{\rho}_{\piGA}) $ has an upper bound.
\begin{equation*}
\resizebox{\textwidth}{!}{%
$
    \begin{aligned}
        &\quad d_{\mathcal{D}}(\rho_{\piE}, \rho_{\piGA}) - d_{\mathcal{D}}(\hat{\rho}_{\piE}, \hat{\rho}_{\piGA}) \\
        &=  \sup_{D \in \mathcal{D}} \left[ \mathbb{E}_{(s, a) \sim \rho_{\piE}} [D(s,a)] - \mathbb{E}_{(s, a) \sim \rho_{\piGA}} [D(s,a)] \right] - \sup_{D \in \mathcal{D}} \left[ \mathbb{E}_{(s, a) \sim \hat{\rho}_{\piE}} [D(s,a)] - \mathbb{E}_{(s, a) \sim \hat{\rho}_{\piGA}} [D(s,a)] \right]
        \\
        &\leq  \sup_{D \in \mathcal{D}} \left\{ \big[ \mathbb{E}_{(s, a) \sim \rho_{\piE}} [D(s,a)] - \mathbb{E}_{(s, a) \sim \rho_{\piGA}} [D(s,a)] \big] -  \big[ \mathbb{E}_{(s, a) \sim \hat{\rho}_{\piE}} [D(s,a)] - \mathbb{E}_{(s, a) \sim \hat{\rho}_{\piGA}} [D(s,a)] \big] \right\} \\
        &\leq \sup_{D \in \mathcal{D}} \left[ \mathbb{E}_{(s, a) \sim \rho_{\piE}} [D(s,a)] - \mathbb{E}_{(s, a) \sim \hat{\rho}_{\piE}} [D(s,a)] \right] + \sup_{D \in \mathcal{D}} \left[\mathbb{E}_{(s, a) \sim \hat{\rho}_{\piGA}} [D(s,a)] -  \mathbb{E}_{(s, a) \sim \rho_{\piGA}} [D(s,a)]  \right]\\
    &\leq \sup_{D \in \mathcal{D}} \left| \mathbb{E}_{(s, a) \sim \rho_{\piE}} [D(s,a)] - \mathbb{E}_{(s, a) \sim \hat{\rho}_{\piE}} [D(s,a)] \right| + \sup_{D \in \mathcal{D}} \left| \mathbb{E}_{(s, a) \sim \rho_{\piGA}} [D(s,a)] - \mathbb{E}_{(s, a) \sim \hat{\rho}_{\piGA}} [D(s,a)] \right|.
    \end{aligned}
$
}
\end{equation*}
We first show that $\sup_{D \in \mathcal{D}} \big| \mathbb{E}_{(s, a) \sim \rho_{\piE}} [D(s,a)] - \mathbb{E}_{(s, a) \sim \hat{\rho}_{\piE}} [D(s,a)] \big|$ can be bounded. Note that the assumption that the  discriminator set $\mathcal{D}$ consists of bounded functions with $\Delta$, i.e. $\underset{ D \in \mathcal{D} }{ \sup } \| D(s, a) \|_{\infty} \leq \Delta$, $\forall (s, a) \in \mathcal{S} \times \mathcal{A}$. According to McDiarmid 's inequality \cite{FML}, with probability at least $1-\frac{\delta}{4}$, the following inequality holds.
\begin{equation}
\label{equation_mcdiarmid}
    \begin{aligned}
        & \quad \underset{D \in \mathcal{D}}{\sup} \left| \mathbb{E}_{(s, a) \sim \rho_{\piE} }[D(s,a)] - \mathbb{E}_{(s, a) \sim \hat{\rho}_{\piE} }[D(s,a)] \right|
        \\
        & \leq
        \mathbb{E} \left[ \underset{D \in \mathcal{D}}{\sup} \left| {\mathbb{E}_{(s, a) \sim \rho_{\piE}}[D(s,a)] - \mathbb{E}_{(s, a) \sim \hat{\rho}_{\piE}}[D(s,a)]} \right| \right]
        + 2\Delta\sqrt{\frac{\log (4/\delta)}{2m}},
    \end{aligned}
\end{equation}
where the outer expectation is taken over the random choice of expert demonstrations $\hat{\rho}_{\piE}$ with $m$ state-action pairs. According to the Rademacher complexity theory \cite{FML}, for the first term of Eq. (\ref{equation_mcdiarmid}) we have that
\begin{equation}\label{EquationRademacher}
    \begin{aligned}
    &\quad \mathbb{E} \left[ \underset{D \in \mathcal{D}}{\sup} \left| {\mathbb{E}_{(s, a) \sim \rho_{\piE}}[D(s,a)] - \mathbb{E}_{(s, a) \sim \hat{\rho}_{\piE}}[D(s,a)]} \right| \right]
    \\
    &\leq  2 \mathbb{E}_{{\bm{\sigma}}, \rho_{\piE}} \left[ \sup_{D \in \mathcal{D}} \sum_{i=1}^{m} \frac{1}{m} \sigma_{i} D(s^{(i)}, a^{(i)}) \right]
    \\
    &= 2\mathcal{R}_{\rho_{\piE}}^{(m)}(\mathcal{D}).
    \end{aligned}
\end{equation}
Based on the connection between Rademacher complexity and empirical Rademacher complexity, we have that with probability at least $1-\frac{\delta}{4}$, the following inequality holds.
\begin{equation}
\label{equation_R_minus_hatR}
    \mathcal{R}_{\rho_{\piE}}^{(m)}(\mathcal{D}) \leq \hat{\mathcal{R}}_{\rho_{\piE}}^{(m)}(\mathcal{D}) + 2\Delta \sqrt{\frac{\log (4/\delta)}{2m}},
\end{equation}
where $\hat{\mathcal{R}}_{\rho_{\piE}}^{(m)}(\mathcal{D}) = \mathbb{E}_{{\bm{\sigma}}} \left[ \sup_{D \in \mathcal{D}} \sum_{i=1}^{m} \frac{1}{m} \sigma_{i} D(s^{(i)}_{\piE}, a^{(i)}_{\piE}) \right] $.
Combining Eq. (\ref{equation_mcdiarmid}) with Eq. (\ref{equation_R_minus_hatR}), with probability at least $1-\frac{\delta}{2}$, we have
\begin{equation}
\label{equation_piE_gen}
    \sup_{D \in \mathcal{D}} \bigl| \mathbb{E}_{(s, a) \sim \rho_{\piE}} [D(s,a)] - \mathbb{E}_{(s, a) \sim \hat{\rho}_{\piE}} [D(s,a)] \bigr| \leq 2 \hat{\mathcal{R}}_{\rho_{\piE}}^{(m)}(\mathcal{D}) + 6 \Delta \sqrt{\frac{\log (4/\delta)}{2m}}.
\end{equation}
By a similar derivation, we obtain that with probability at least $1-\frac{\delta}{2}$, the following inequality holds.
\begin{equation}
\label{equation_piGA_gen}
    \sup_{D \in \mathcal{D}} \bigl| \mathbb{E}_{(s, a) \sim \rho_{\piGA}} [D(s,a)] - \mathbb{E}_{(s, a) \sim \hat{\rho}_{\piGA}} [D(s,a)] \bigr| \leq 2 \hat{\mathcal{R}}_{\rho_{\piGA}}^{(m)}(\mathcal{D}) + 6 \Delta \sqrt{\frac{\log (4/\delta)}{2m}},
\end{equation}
where $\hat{\mathcal{R}}_{\rho_{\piGA}}^{(m)}(\mathcal{D}) = \mathbb{E}_{{\bm{\sigma}}} \left[ \sup_{D \in \mathcal{D}} \sum_{i=1}^{m} \frac{1}{m} \sigma_{i} D(s^{(i)}_{\piGA}, a^{(i)}_{\piGA}) \right]$.
Combining Eq. (\ref{equation_triangle_inequality}) with Eq. (\ref{equation_piE_gen}) and Eq. (\ref{equation_piGA_gen}), we complete the proof.
\end{proof}

\subsection{Proof of Theorem \ref{theorem_gail}}
\label{appendix:proof_of_gail_generalization}
\begin{proof}[Proof of Theorem \ref{theorem_gail}]
We use the re-formulation of policy value $V_{\pi} = \frac{1}{1-\gamma} \mathbb{E}_{(s, a) \sim \rho_{\pi}} [r(s,a)]$ and derive that
\begin{equation*}
     V_{\piE} - V_{\piGA} \leq \frac{1}{1-\gamma} \left| \mathbb{E}_{(s, a) \sim \rho_{\piGA}} [r(s,a)] - \mathbb{E}_{(s, a) \sim \rho_{\piE}} [r(s,a)] \right|.
\end{equation*}
As we assume that the reward function $r$ lies in the linear span of $\mathcal{D}$, there exists $n \in \mathbb{N}, \{c_i  \in \mathbb{R} \}_{i=1}^{n}$ and $\{ D_i \in \mathcal{D} \}_{i=1}^{n}$, such that $r = c_0 + \sum_{i=1}^{n} c_i D_i$. Noticed by $c_0$ will be eliminated by the difference of policy value, we obtain that
\begin{equation*}
\begin{split}
         V_{\piE} - V_{\piGA} &\leq \frac{1}{1-\gamma} \left| \sum_{i=1}^n c_i \mathbb{E}_{(s, a) \sim \rho_{\piGA}} [D_i (s,a)] - \sum_{i=1}^n c_i \mathbb{E}_{(s, a) \sim \rho_{\piE}} [D_i (s,a)] \right|
        \\
        &\leq \frac{1}{1-\gamma} \sum_{i=1}^n  \bigl| c_i \bigr|   \bigl| \mathbb{E}_{(s, a) \sim \rho_{\piGA}}[D_{i} (s,a)] - \mathbb{E}_{(s, a) \sim \rho_{\piE}}[D_i (s,a)] \bigr|
        \\
        &\leq \frac{1}{1-\gamma} \biggl( \sum_{i=1}^n  \vert c_i \vert \biggr) d_{\mathcal{D}} (\rho_{\piGA}, \rho_{\piE})
        \\
        &\leq \frac{1}{1-\gamma} \| r \|_{\mathcal{D}} d_{\mathcal{D}} (\rho_{\piGA}, \rho_{\piE}),
\end{split}
\end{equation*}
where $\| r \|_{\mathcal{D}} = \inf \{ \sum_{i=1}^n \vert c_i \vert: r = \sum_{i=1}^n c_i D_i + c_0, \forall n \in \mathbb{N}, c_0, c_i \in \mathbb{R}, D_i \in \mathcal{D}\}$.
Combining the above inequality with Lemma \ref{lemma_gail_neural_distance} completes the proof.
\end{proof}

\section{Analysis of Imitating-environments}\label{sec:imitate_environment}

We first introduce the error bound of policy evaluation without policy divergences, which will be used to prove Lemma \ref{lemma:compounding_evaluation_loss} later.
\begin{lemma}
\label{lemma:compounding_error_without_policy_divergence}
Given an MDP with true transition model $M^*$, suppose the model error is $\epsilon_{m}$, i.e.,$\mathbb{E}_{(s, a) \sim \rho^{M^*}_{\pi_D}} \left[D_{\mathrm{KL}} \bigl( M^*(\cdot|s,a), M_\theta (\cdot|s, a) \bigr)\right] \leq \epsilon_m$ (see Eq. (3)), then for the data-collecting policy $\pi_D$ we have
\begin{equation}
\label{eq_compounding_error}
\left| V_{\pi_{D}}^{M_\theta} - V_{\pi_{D}}^{M^*} \right| \leq \frac{\sqrt{2} R_{\mathrm{max}} \gamma}{(1-\gamma)^2} \sqrt{\epsilon_m}.
\end{equation}
\end{lemma}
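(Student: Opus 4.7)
The plan is to mirror the error-propagation chain behind Theorem \ref{theorem_value_bc}, substituting a perturbation of the transition model for the perturbation of the policy while keeping $\pi_D$ fixed. A convenient first reduction uses the fact that both occupancies factor as $\rho^{M}_{\pi_D}(s,a)=\pi_D(a|s)\,d^{M}_{\pi_D}(s)$ with the same $\pi_D$, so the cross-term that appears in Lemma \ref{lemma_bc_sa_con} vanishes and $D_{\mathrm{TV}}(\rho^{M^*}_{\pi_D},\rho^{M_\theta}_{\pi_D})=D_{\mathrm{TV}}(d^{M^*}_{\pi_D},d^{M_\theta}_{\pi_D})$. Since Lemma \ref{lemma:value_error_based_state_action_discrepancy} only relies on the identity $V^{M}_{\pi}=\tfrac{1}{1-\gamma}\mathbb{E}_{(s,a)\sim\rho^{M}_{\pi}}[r(s,a)]$, which holds in both MDPs with the common reward, it transfers verbatim and yields $|V^{M^*}_{\pi_D}-V^{M_\theta}_{\pi_D}|\leq\tfrac{2R_{\mathrm{max}}}{1-\gamma}\,D_{\mathrm{TV}}(d^{M^*}_{\pi_D},d^{M_\theta}_{\pi_D})$. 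Thus everything reduces to a state-distribution perturbation bound.

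The heart of the argument is a model-perturbation variant of Lemma \ref{leamma_s_cond}. Setting $P^{M}_{\pi_D}(s'|s)=\sum_a \pi_D(a|s)M(s'|s,a)$ and $N^{M}_{\pi_D}=(I-\gamma P^{M}_{\pi_D})^{-1}$, so that $d^{M}_{\pi_D}=(1-\gamma)N^{M}_{\pi_D}d_0$, the standard resolvent identity gives
\[
d^{M^*}_{\pi_D}-d^{M_\theta}_{\pi_D}=\gamma\,N^{M_\theta}_{\pi_D}\bigl(P^{M^*}_{\pi_D}-P^{M_\theta}_{\pi_D}\bigr)d^{M^*}_{\pi_D}.
\]
Taking $L^1$-norms, bounding $\|N^{M_\theta}_{\pi_D}\|_1\leq 1/(1-\gamma)$ via the geometric series, and expanding the kernel difference through $\pi_D$ reduces the remaining factor to $2\,\mathbb{E}_{(s,a)\sim\rho^{M^*}_{\pi_D}}[D_{\mathrm{TV}}(M^*(\cdot|s,a),M_\theta(\cdot|s,a))]$. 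Chained with the previous display, this produces
\[
\bigl|V^{M^*}_{\pi_D}-V^{M_\theta}_{\pi_D}\bigr|\leq\frac{2\gamma R_{\mathrm{max}}}{(1-\gamma)^2}\,\mathbb{E}_{(s,a)\sim\rho^{M^*}_{\pi_D}}\!\left[D_{\mathrm{TV}}\bigl(M^*(\cdot|s,a),M_\theta(\cdot|s,a)\bigr)\right].
\]

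To close, I apply Pinsker's inequality pointwise, $D_{\mathrm{TV}}(\mu,\nu)\leq\sqrt{D_{\mathrm{KL}}(\mu,\nu)/2}$, followed by Jensen's inequality on the concave square root to swap expectation and root; the hypothesis $\mathbb{E}_{(s,a)\sim\rho^{M^*}_{\pi_D}}[D_{\mathrm{KL}}(M^*(\cdot|s,a),M_\theta(\cdot|s,a))]\leq\epsilon_m$ then collapses the right-hand side to $\frac{\sqrt{2}\gamma R_{\mathrm{max}}}{(1-\gamma)^2}\sqrt{\epsilon_m}$, which is exactly the claimed bound. The main housekeeping subtlety I expect is the orientation of the resolvent identity: both $\gamma N^{M^*}_{\pi_D}(P^{M^*}_{\pi_D}-P^{M_\theta}_{\pi_D})d^{M_\theta}_{\pi_D}$ and $\gamma N^{M_\theta}_{\pi_D}(P^{M^*}_{\pi_D}-P^{M_\theta}_{\pi_D})d^{M^*}_{\pi_D}$ are algebraically valid, but only the latter averages the per-$(s,a)$ model TV against $\rho^{M^*}_{\pi_D}$, the distribution for which we actually have a KL assumption; the opposite choice would leave the expectation under $\rho^{M_\theta}_{\pi_D}$, which is not directly controlled by the hypothesis.
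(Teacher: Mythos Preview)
Your proposal is correct and follows essentially the same approach as the paper's proof: the same resolvent identity $d^{M^*}_{\pi_D}-d^{M_\theta}_{\pi_D}=\gamma N^{M_\theta}_{\pi_D}(P^{M^*}_{\pi_D}-P^{M_\theta}_{\pi_D})d^{M^*}_{\pi_D}$, the same $\|N\|_1\leq 1/(1-\gamma)$ bound, the same reduction of the kernel difference to $2\,\mathbb{E}_{\rho^{M^*}_{\pi_D}}[D_{\mathrm{TV}}(M^*,M_\theta)]$, and the same Pinsker-plus-Jensen closure. Your observation that the common policy $\pi_D$ collapses $D_{\mathrm{TV}}(\rho^{M^*}_{\pi_D},\rho^{M_\theta}_{\pi_D})$ to $D_{\mathrm{TV}}(d^{M^*}_{\pi_D},d^{M_\theta}_{\pi_D})$ is exactly what the paper does in its final display, and your remark about choosing the orientation of the resolvent identity so that the expectation lands on $\rho^{M^*}_{\pi_D}$ is a point the paper uses implicitly but does not comment on.
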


\begin{proof}
The proof is similar to what we have done in Appendix \ref{appendix_error_propagation}. First, we show that
    \begin{equation}
        \label{equation_d_pi}
        \begin{split}
            d^{M_\theta}_{\pi_D} &= (1-\gamma) \sum_{t=0}^{\infty} \gamma^t \Pr (s_t = s; \pi_{D}, M_\theta, d_0)
            = (1-\gamma) (I - \gamma P_\theta )^{-1} d_0.
        \end{split}
    \end{equation}
    where $P_\theta (s^\prime | s) = \sum_{a \in \mathcal{A}} M_\theta(s^\prime | s, a) \pi{_\textit{D}}(a | s) $. Following the similar algebraic transformation in Lemma \ref{leamma_s_cond}, we obtain that
    \begin{equation*}
        d^{M_\theta}_{\pi_D} - d^{M^*}_{\pi_D} = \gamma G (P_\theta - P^*) d^{M^*}_{\pi_D},
    \end{equation*}
    where $G_{\theta} = (I - \gamma P_\theta)^{-1}$ and $G^* = (I - \gamma P^*)^{-1}$.
    Based on the Cauchy–Schwarz inequality, we have that
    \begin{equation*}
        \begin{split}
            D_{\textnormal{TV}} (d^{M_\theta}_{\pi_D}, d^{M^*}_{\pi_D}) &= \frac{\gamma}{2}
            \| G_{\theta} (P_\theta - P^*) d^{M^*} \|_{1}
            \leq \frac{\gamma}{2} \| G_{\theta} \|_{1} \|(P_\theta - P^*) d^{M^*}_{\pi_D} \|_1.
        \end{split}
    \end{equation*}
    We first show that $\| G_{\theta} \|_1$ is bounded as
    \begin{equation*}\label{bound_M_2}
        \Vert G_{\theta} \Vert_{1} 
        = \Vert \sum_{t=0}^{\infty} \gamma^{t} P_{\theta}^{t} \Vert_{1}
        \leq \sum_{t=0}^{\infty} \gamma^{t} \Vert P_\theta \Vert_{1}^{t}
        \leq \sum_{t=0}^{\infty} \gamma^{t}
        =\frac{1}{1-\gamma}.
    \end{equation*}
    We then show that $\left\| (P_\theta - P^*) d^{M^*}_{\pi_D} \right\|_{1}$ is bounded,
    \begin{equation*}
        \begin{split}
            \left\| (P_\theta - P^*) d^{M^*}_{\pi_D} \right\|_1 &\leq \sum_{s^\prime, s} \vert P_\theta(s^\prime | s) - P^*(s^\prime | s) \vert d^{M^*}_{\pi_D}(s)
            \\
            &\leq \sum_{s^\prime, s, a} \vert M_\theta(s^\prime | s, a) - M^*(s^\prime | s, a) \vert \pi_{D}(a | s) d^{M^*}_{\pi_D}(s) \\
            &= 2 \mathbb{E}_{(s, a) \sim \rho^{M^*}_{\pi_D}} [D_{\textnormal{TV}}(M_\theta(\cdot | s, a) , M^*(\cdot | s, a) ) ].
        \end{split}
    \end{equation*}
    Thanks to Pinsker's inequality and Jensen's inequality, we can get that
    \begin{equation}
    \label{eq_rho_and_e_m}
            D_{\textnormal{TV}}(d^{M_\theta}_{\pi_D}, d^{M^*}_{\pi_D}) \leq \frac{ \sqrt{2} \gamma}{ 2 (1-\gamma)} \sqrt{\epsilon_m}.
    \end{equation}
From Lemma \ref{lemma:value_error_based_state_action_discrepancy}, we obtain that
    \begin{equation*}
        \begin{split}
            \left| V_{\pi_D}^{M_\theta} - V_{\pi_D}^{M^*} \right| &\leq \frac{R_{\textnormal{max}}}{1-\gamma} \sum_{(s, a)} \left| \rho^{ M_\theta}_{\pi_D} (s, a) - \rho^{ M^*}_{\pi_D} (s, a) \right|
            \\
            &\leq \frac{R_{\textnormal{max}}}{1-\gamma} \sum_{s} \left| d^{M_\theta}_{\pi_D} (s) - d^{M^*}_{\pi_D} (s) \right| \sum_{a} \pi_{D}(a | s) \\
            &\leq \frac{\sqrt{2} R_{\textnormal{max}} \gamma}{(1-\gamma)^2} \sqrt{\epsilon_m},
        \end{split}
    \end{equation*}
    which concludes the proof.
\end{proof}

\subsection{Proof of Lemma \ref{lemma:compounding_evaluation_loss}}
\begin{proof}[Proof of Lemma \ref{lemma:compounding_evaluation_loss}]
Derived by the triangle inequality, the evaluation error can be decomposed into three parts.
\begin{equation*}
        \vert V_{\pi}^{M^*} - V_{\pi}^{M_\theta} \vert \leq \vert V^{M^*}_{\pi} - V^{M^*}_{\pi_D} \vert + \vert V^{M^*}_{\pi_D} - V^{M_\theta}_{\pi_D} \vert + \vert V^{M_\theta}_{\pi_D} - V^{M_\theta}_{\pi} \vert.
\end{equation*}
For the second term on the RHS, according to Lemma \ref{lemma:compounding_error_without_policy_divergence}, we have
\begin{equation*}
    \vert V^{M^*}_{\pi_D} - V^{M_\theta}_{\pi_D} \vert \leq \frac{\sqrt{2} R_{\mathrm{max}} \gamma}{(1-\gamma)^2} \sqrt{\epsilon_m}. 
\end{equation*}
For the first term, applying Lemma \ref{lemma_bc_sa_con} and Lemma \ref{lemma:value_error_based_state_action_discrepancy}, we get that
\begin{equation*}
\begin{split}
    \vert V^{M^*}_{\pi} - V^{M^*}_{\pi_D} \vert  &\leq \frac{2 R_{\mathrm{max}}}{1-\gamma} D_{\textnormal{TV}} (\rho_{\pi}^{M^*}, \rho_{\pi_D}^{M^*})
    \\
    &\leq \frac{2 R_{\mathrm{max}}}{(1-\gamma)^2} \mathbb{E}_{s \sim d_{\pi_D}^{ M^*}}[D_{\textnormal{TV}} (\pi(\cdot | s), \pi_D(\cdot | s))]
    \\
    &\leq \frac{\sqrt{2} R_{\mathrm{max}}}{(1-\gamma)^2} \sqrt{\epsilon_\pi}.
\end{split}
\end{equation*}
Similar results hold for the third term, meaning that $
\vert V^{M_\theta}_{\pi_D} - V^{M_\theta}_{\pi} \vert \leq \frac{\sqrt{2} R_{\mathrm{max}}}{(1-\gamma)^2} \sqrt{\epsilon_\pi}
$.
Combining the above three bounds completes the proof.
\end{proof}

\subsection{Proof of Theorem \ref{lemma:gail_evaluation_loss}}
\begin{proof}[Proof of Theorem \ref{lemma:gail_evaluation_loss}]
Due to Lemma \ref{lemma:value_error_based_state_action_discrepancy}, we obtain that
\begin{equation*}
    \begin{split}
        \vert V_{\pi}^{M} - V_{\pi}^{M^*} \vert 
        &\leq \frac{2 R_{\textnormal{max}}}{1-\gamma} D_{\textnormal{TV}}(\rho_{\pi}^{M}, \rho_{\pi}^{M^*})
        \\
        &\leq \frac{2 R_{\textnormal{max}}}{1-\gamma} ( D_{\textnormal{TV}}(\rho_{\pi}^{M}, \rho_{\pi_D}^{M}) + D_{\textnormal{TV}}( \rho_{\pi_D}^{M}, \rho_{\pi_{D}}^{M^*}) + D_{\textnormal{TV}}(\rho_{\pi_D}^{M^*}, \rho_{\pi}^{M^*}) ).
    \end{split}
\end{equation*}
The last inequality follows the triangle inequality. For the term $D_{\textnormal{TV}}(\rho_{\pi_D}^{M}, \rho_{\pi_D}^{M^*})$, we obtain that
\begin{equation*}
    \begin{aligned}
        \quad D_{\textnormal{TV}}(\rho_{\pi_D}^{M}, \rho_{\pi_D}^{M^*}) &= \frac{1}{2}
        \sum_{s,a} \big\vert \sum_{s^\prime}  \big( \mu^{M}(s,a,s^\prime) - \mu^{M^*}(s,a,s^\prime) \big)  \big\vert
        \\
        &\leq \frac{1}{2} \sum_{s,a,s^\prime} \big\vert \mu^{M}(s,a,s^\prime) - \mu^{M^*}(s,a,s^\prime)  \big\vert \\
        &= D_{\textnormal{TV}}(\mu^M, \mu^{M^*}).
    \end{aligned}
\end{equation*}
From Eq.(\ref{equation_connection_JS_TV}), we derive that
\begin{equation*}
    D_{\textnormal{TV}}(\rho_{\pi_D}^{M}, \rho_{\pi_D}^{M^*}) \leq \sqrt{2 D_{\textnormal{JS}}(\rho_{\pi_D}^{M}, \rho_{\pi_D}^{M^*})}  \leq \sqrt{2 D_{\textnormal{JS}} (\mu^M, \mu^{M^*})}.
\end{equation*}
    Derived by Lemma \ref{lemma_bc_sa_con}, for the first term $D_{\textnormal{TV}}(\rho_{\pi}^{M}, \rho_{\pi_D}^{M})$,  we get that
    \begin{equation*}
        \begin{split}
            \quad D_{\textnormal{TV}}(\rho_{\pi}^{M}, \rho_{\pi_D}^{M}) &\leq \frac{1}{1-\gamma} \mathbb{E}_{s \sim d^{M}_{\pi}} \left[D_{\mathrm{TV}} \bigl( \pi(\cdot|s), \pi_D (\cdot|s) \bigr)\right]
            \\
            & \leq\frac{\sqrt{2}}{2(1-\gamma)} \mathbb{E}_{s \sim d^{M}_{\pi}} \left[\sqrt{D_{\mathrm{KL}} \bigl( \pi(\cdot|s), \pi_D (\cdot|s) \bigr)} \right].
            \\
            &\leq \frac{\sqrt{2} }{2(1-\gamma)} \sqrt{\epsilon_\pi}.
        \end{split}
    \end{equation*}
The last two inequalities follow Pinsker's inequality and the definition of $\epsilon_{\pi}$ respectively. Similarly, for the second term $D_{\textnormal{TV}}(\rho_{\pi_D}^{M^*}, \rho_{\pi}^{M^*})$, we have that
\begin{equation*}
    D_{\textnormal{TV}}(\rho_{\pi_D}^{M^*}, \rho_{\pi}^{M^*}) \leq \frac{\sqrt{2} }{2(1-\gamma)} \sqrt{\epsilon_\pi}.
\end{equation*}
    Combining the above three upper bounds completes the proof.
\end{proof}

\subsection{Environment-learning with GAIL}
\label{appendix:subsection_environment-learning-with-GAIL}
\begin{algorithm}
    \caption{Environment-learning with GAIL}
    \label{alg:envrionment-learning_gail}
    \begin{algorithmic}[1]
    \State \textbf{Input:} data-collecting policy $\pi_D$, total iterations $N$, model update iteration $N_G$, discriminator update iteration $N_D$.
    \State Initialize discriminator $D$, model $M_{\theta}$, and empty dataset $\mathcal{B}^*$ as well as $\mathcal{B}$.
        \State $\mathcal{B^*} \leftarrow$ Collect samples using $\pi_D$ in model $M^*$.
        \For{$N$ iterations}
            \For{$N_{G}$ iterations}
                \State $\mathcal{B} \leftarrow$ Collect samples using $\pi_D$ in model $M_{\theta}$.
                \State Assign rewards to state-action-next-state pairs in $\mathcal{B}$ by discriminator $D$.
                \State Update model $M_\theta$ by maximizing rewards with samples from $\mathcal{B}$.
            \EndFor
            \For{$N_{D}$ iterations}
                \State Update discriminator $D$ by maximizing the following function:
                \begin{equation*}
                        \sum_{(s, a, s^\prime) \in \mathcal{B}}[ \log(D (s,a,s^\prime))] + \sum_{(s, a, s^\prime) \in \mathcal{B}^*}[ \log(1- D(s,a,s^\prime))].
                \end{equation*}
            \EndFor
        \EndFor
        \State \textbf{Output:} environment model $M_\theta$.
    \end{algorithmic}
\end{algorithm}
The process of applying GAIL to learn the environment transition model is summarized in Algorithm \ref{alg:envrionment-learning_gail}.

\section{Wasserstein GAIL}
\label{subsection:Wasserstein GAIL}
\begin{algorithm}
\caption{Wasserstein
GAIL}
\label{algorithm:WGAIL}
\begin{algorithmic}[1]
\State \textbf{Input:} Expert demonstrations $\mathcal{B}^*$, total iterations $N$, policy update iterations $N_G$, discriminator update iterations $N_D$.
\State Initialize discriminator $D$, policy $\pi$, and an empty dataset $\mathcal{B}$.
\For{$N$ iterations}
\For{$N_G$ iterations}
\State $\mathcal{B} \leftarrow$ Collect samples using policy $\pi$.
\State Assign scaled rewards to state-action pairs in $\mathcal{B}$ by discriminator $D$. 
\State Update policy $\pi$ by maximizing rewards with samples from $\mathcal{B}$.
\EndFor
\For{$N_D$ iterations}
\State Update discriminator $D$ by maximizing Eq. (\ref{equation:d_loss_wgail}) with samples from $\mathcal{B}^*$ and $\mathcal{B}$.
\EndFor
\EndFor
\State \textbf{Output:} policy $\pi$.
\end{algorithmic}
\end{algorithm}
Similar to Wasserstein GAN (WGAN) \cite{WGAN}, we can also introduce Wasserstein distance into GAIL. We call such an algorithm as Wasserstein GAIL (WGAIL for short). Specifically, the discriminator is selected from all $1$-Lipschitz function classes by considering the following optimization problem.
$$
\max_{D \in ||\mathcal{D}||_{\mathrm{Lip}} \leq 1} \mathbb{E}_{(s, a) \sim \rho_{\piE}}[D(s, a)] - \mathbb{E}_{(s, a) \sim \rho_\pi}[D(s, a)]
$$
Due to computation intractability, we cannot compute all $1$-Lipschitz functions in practice, and thus we are shifted to its neural network approximation, where $D$ is parameterized by certain neural networks. As our result suggests, this method can still generalize well when its model complexity is controlled. However, ordinary neural networks are often not Lipschitz continuous. To maintain a good approximation to $1$-Lipschitz continuous function classes, the gradient penalty technique is introduced in WGAN \cite{GP-WGAN}. This technique adds a regularization term that employs a quadratic cost to the gradient norm. Hence, denoting $(s, a)$ as $z$, the loss function for the discriminator in WGAIL is:
\begin{equation}
\label{equation:d_loss_wgail}
    L(D) =  \mathbb{E}_{z \sim \rho_\pi} \big[ D(z) \big]  - \mathbb{E}_{z \sim \rho_{\piE}} \big[ D(z) \big] + \lambda \mathbb{E}_{z \sim \tilde{\rho}} \big[ \big(||\nabla_{z} D(z)|| - 1 \big)^2 \big],
\end{equation}
where $\tilde{\rho}$ is a mixing distribution of $\rho_\pi$ and $\rho_{\piE}$, and $\lambda$ is a positive regularization coefficient ($\lambda = 10$ performs well in practice). Following \cite{Jonathan16}, we also scale reward function (discriminator's output) properly to stabilize training. This is important because the optimization in WGAIL is different from the one in WGAN. Concretely, reinforcement learning algorithms often use the evaluation value rather than the gradient information to perform gradient descent. Without scaling, rewards given by the discriminator often fluctuate, which may lead to an unstable optimization. To tackle this issue, at each iteration, we firstly centralize the given rewards by subtracting the mean and subsequently scale them by dividing the range (the difference between the maximal value the minimal value). The algorithm procedure is outlined in Algorithm \ref{algorithm:WGAIL}.

\section{Experiment Details}
\label{section:experiment_details}

\subsection{Imitating Policies}
\label{subsection:appendix_imitating_policies}
We evaluate the considered algorithms on OpenAI Gym \cite{gym} benchmark tasks. Information about state dimension, action dimension, and episode length information is listed in Table \ref{table:enviroment}. We run the state-of-the-art algorithm SAC \cite{SAC} for $1$ million samples to obtain expert policies. All imitation learning approaches use $2$-layer MLP policy network with $100$ hidden sizes and $tanh$ activation function. Except for DAgger that continues to collect new samples and query expert policies (i.e., DAgger collects $1000$ samples and gets action labels from expert policies per $5000$ iterations), all methods are provided the same $3$ expert trajectories with length $1000$. Key parameters of BC and DAgger are give in Table \ref{table:parameter_bc} and Table \ref{table:parameter_dagger}, respectively. Other methods including GAIL, FEM~\cite{abbeel04} and GTAL~\cite{SyedBS08} use TRPO \cite{TRPO} to optimize policies, and key parameters are given in Table \ref{table:parameter_gail}. All experiments run with $3$ random seeds (namely, $100$, $200$ and $300$). During the training process, we periodically evaluate the learned policies on true environments with $10$ trajectories. Learning curves are given in Figure \ref{figure:learning_curve}. The final performance of imitated policies and expert policies are listed in Table \ref{table:discounted_return}. Please refer to our source code in supplementary materials for other details.

\begin{table}[t]
\caption{Information about tasks in imitating policies.}
\label{table:enviroment}
\centering
\begin{tabular}{@{}llll@{}}
\toprule
Tasks           & State Dimension & Action Dimension & Episode Length \\ \midrule
HalfCheetah-v2 & 17              & 6                & 1000           \\
Hopper-v2      & 11              & 3                & 1000           \\
Walker2d-v2    & 17              & 6                & 1000           \\ \bottomrule
\end{tabular}%
\end{table}

\begin{table}[H]
\centering
\caption{Key parameters of Behavioral Cloning.}
\label{table:parameter_bc}
\begin{tabular}{@{}ll@{}}
\toprule
Parameter        & Value \\ \midrule
learning rate   & 3e-4  \\
batch size      & 128   \\
total number of iters & 100k  \\ \bottomrule
\end{tabular}%
\end{table}

\begin{table}[H]
\caption{Key parameters of DAgger.}
\label{table:parameter_dagger}
\centering
\begin{tabular}{@{}ll@{}}
\toprule
Parameter                     & Value \\ \midrule
learning rate                & 3e-4  \\
batch size                   & 128   \\
number of total training iterations              & 100k  \\
collecting frequency                & 5k    \\
number of new demonstrations per iteration & 1k    \\ \bottomrule
\end{tabular}%
\end{table}

\begin{table}[H]
\caption{Key parameters of GAIL, AIRL, FEM and GTAL.}
\label{table:parameter_gail}
\centering
\begin{tabular}{@{}ll@{}}
\toprule
Parameter                           & Value \\ \midrule
number of generator iterations                      & 5     \\
number of discriminator iterations                     & 1     \\
number of rollout samples per iteration      & 1k    \\
total number of collecting samples & 3M    \\
maximal KL divergence                             & 0.01  \\ \bottomrule
\end{tabular}%
\end{table}

\begin{table}[H]
\caption{Discounted returns of learned policies. We use $\pm$ to denote the standard deviation}
\label{table:discounted_return}
\centering
\resizebox{\linewidth}{!}{%
\begin{tabular}{@{}lllllllllll@{}}
\toprule
\multicolumn{1}{l}{} &Tasks            & Expert       & BC            & DAgger              & FEM            & GTAL & GAIL     &WGAIL &AIRL  \\ \midrule
\multirow{3}{*}{$\gamma=0.9$}  & HalfCheetah-v2 & $10.59\pm0.00$   & $4.02\pm0.97$     & $8.06\pm0.33$       & $-11.40\pm1.67$    & $-14.39 \pm 4.37$ & $1.86\pm0.08$  &  $-2.13\pm2.78$ & $1.57 \pm 1.78$      \\
                            & Hopper-v2      & $10.85\pm0.00$   & $10.69\pm0.10$    & $10.86\pm0.02$       & $12.75\pm0.81$     &   $13.93 \pm 0.74$ & $10.31\pm0.19$  &  $11.30\pm1.06$ & $13.50 \pm 0.56$  \\
                            & Walker2d-v2    & $5.31\pm0.00$    & $5.60\pm0.20$     & $5.30\pm0.11$        & $9.86\pm0.87$      &   $11.31 \pm 0.49$ & $5.24\pm0.41$   & $9.64\pm1.30$ & $5.96 \pm 0.56$   \\ \midrule
\multirow{3}{*}{$\gamma=0.99$} & HalfCheetah-v2 & $511.99\pm0.00$  & $137.30\pm70.70$  & $465.37\pm4.56$      & $-148.64\pm10.50$  &  $-193.40 \pm 88.66$ & $251.49\pm22.00$ & $315.21\pm57.95$ & $305.69 \pm 94.93$    \\
                            & Hopper-v2      & $275.81\pm0.00$  & $155.19\pm16.27$  & $276.10\pm0.15 $    & $238.63\pm7.24$    &   $227.96 \pm 18.18$  & $263.17\pm3.47$  & $178.17\pm106.70$ & $259.35 \pm 5.48$   \\
                            & Walker2d-v2    & $346.63\pm0.00$  & $244.67\pm30.28$  & $345.87\pm1.79$     & $129.21\pm16.38$   &    $176.27 \pm 22.33$ & $338.60\pm9.28$  &  $249.95\pm27.41$ & $314.11 \pm 34.82$   \\ \midrule
\multirow{3}{*}{$\gamma=0.999$} & HalfCheetah-v2 & $4097.30\pm0.00$ & $536.68\pm384.66$ & $3730.81\pm31.57$   & $-1150.45\pm235.64$ &  $-1509.39 \pm 877.41$ & $3338.52\pm191.06$ & $2670.44\pm437.00$ & $3303.42 \pm 262.79$    \\
                            & Hopper-v2      & $2223.49\pm0.00$ & $408.25\pm222.42$ & $1903.02\pm83.18$  & $1878.19\pm122.06$ &    $1731.93 \pm 233.34$ & $2177.76\pm43.64$ & $1184.20\pm805.75$ &$2187.72 \pm 17.90$  \\
                            & Walker2d-v2    & $3151.77\pm0.00$ & $995.05\pm330.00$ & $2963.82\pm26.59$  & $1039.71\pm231.21$ &    $1765.62 \pm 212.79$ & $2912.35\pm335.22$ & $1565.15\pm1006.01$ &$1877.53 \pm 651.77$ \\ \bottomrule
\end{tabular}%
}
\end{table}

\begin{figure}[t]
\centering
\includegraphics[width=0.9\linewidth]{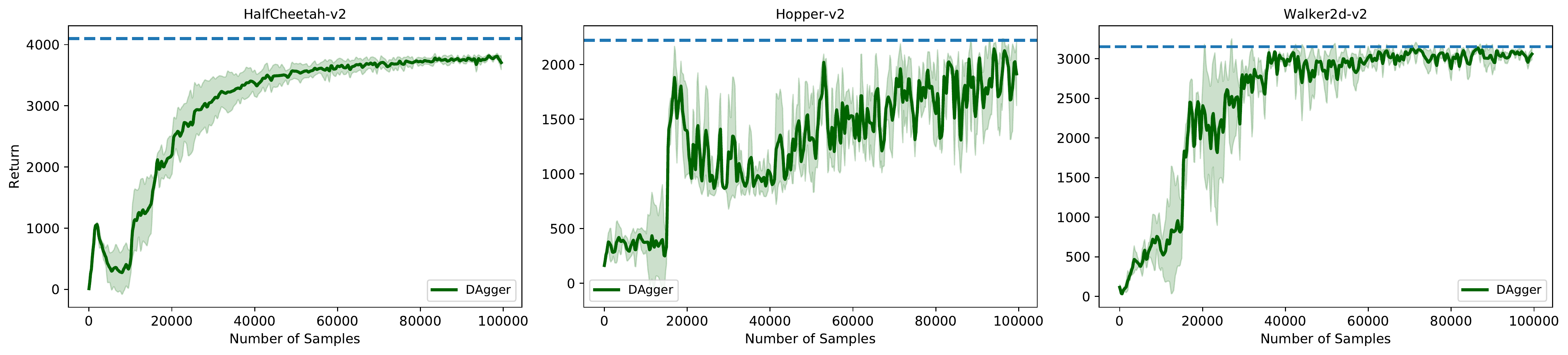}
\includegraphics[width=0.9\linewidth]{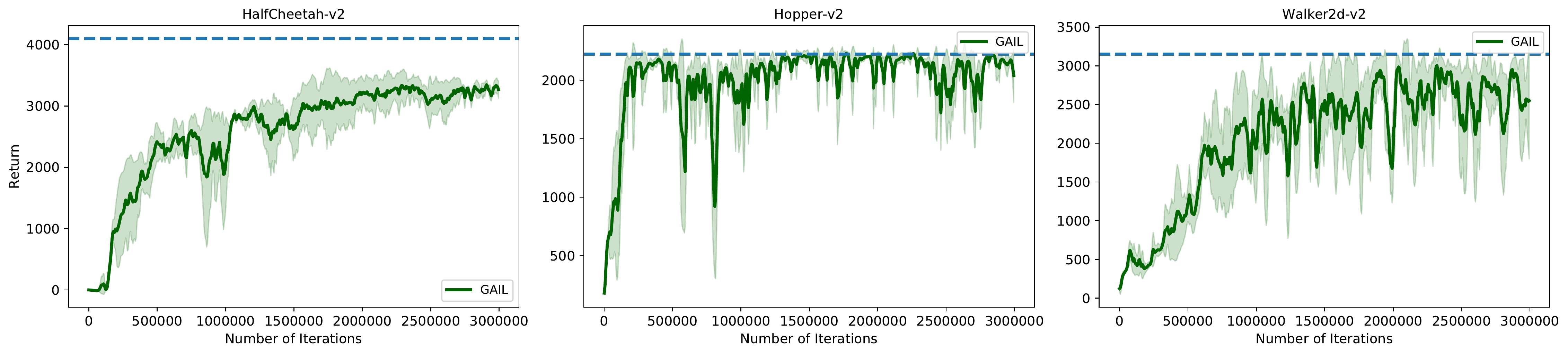}
\includegraphics[width=0.9\linewidth]{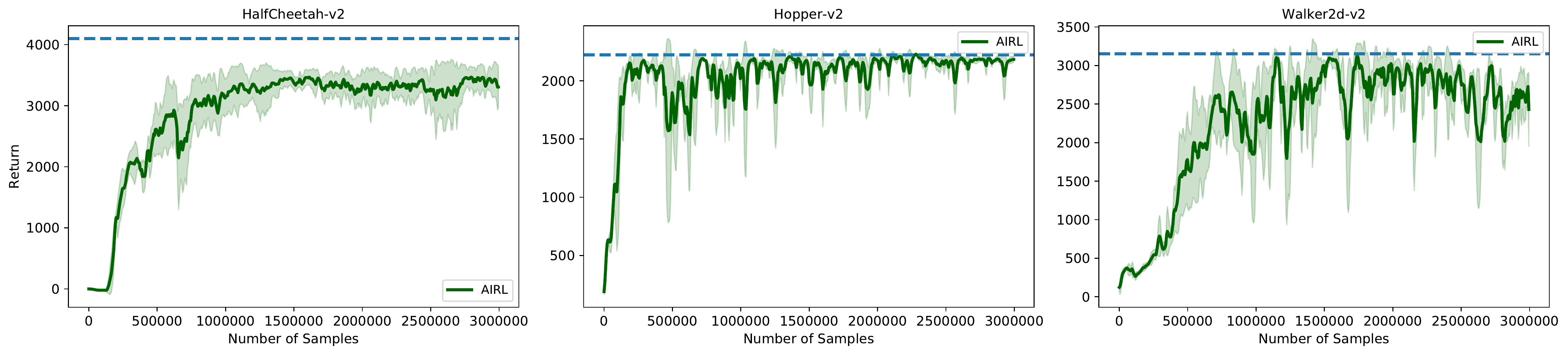}
\includegraphics[width=0.9\linewidth]{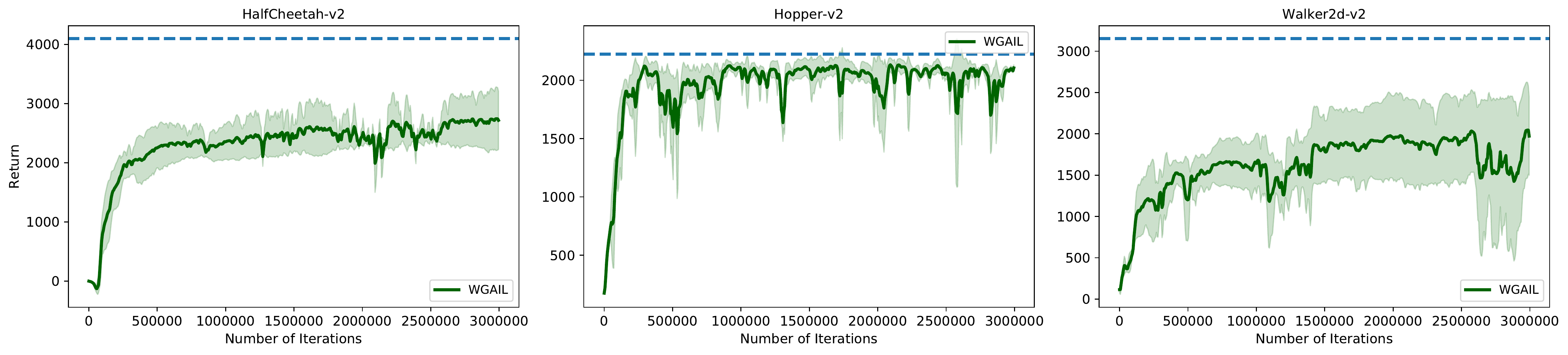}
\includegraphics[width=0.9\linewidth]{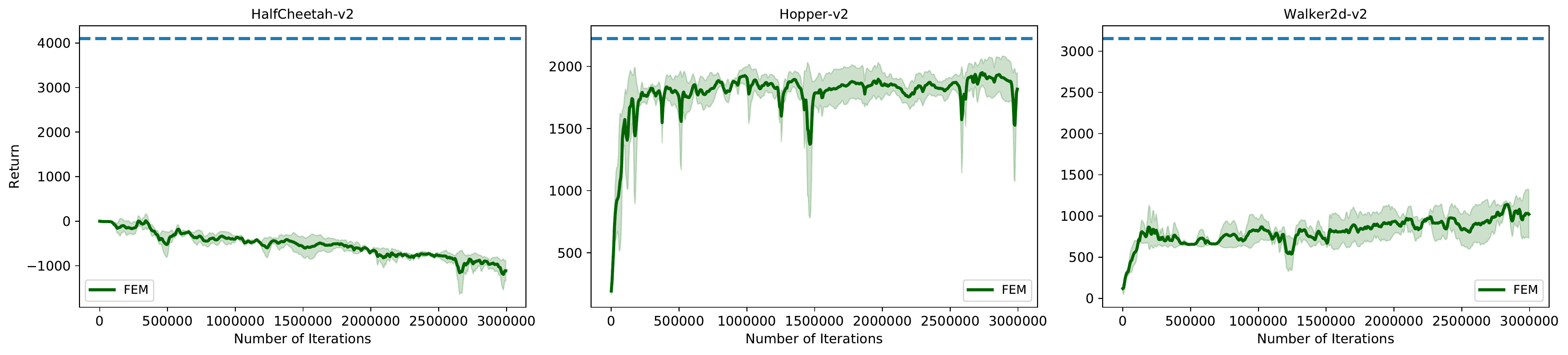}
\includegraphics[width=0.9\linewidth]{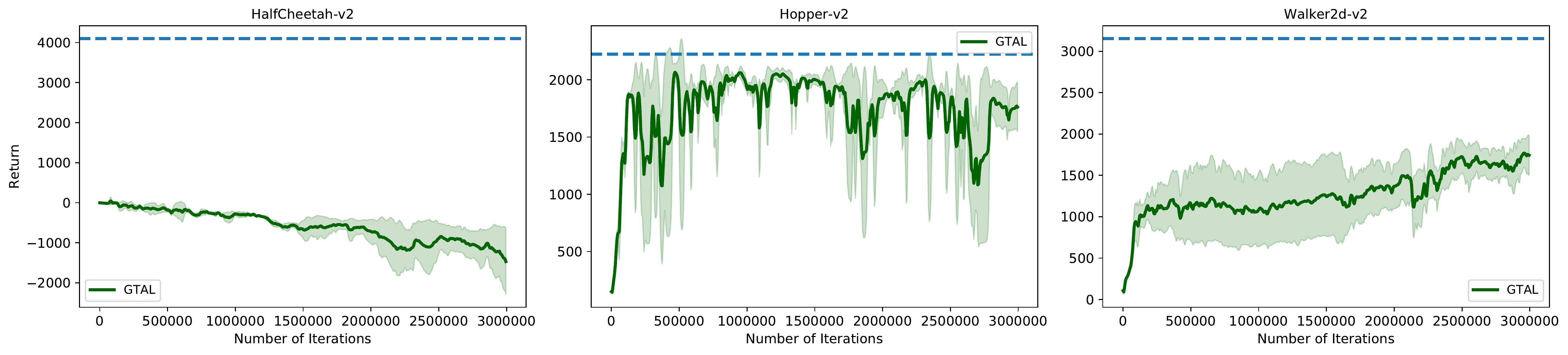}
\includegraphics[width=0.9\linewidth]{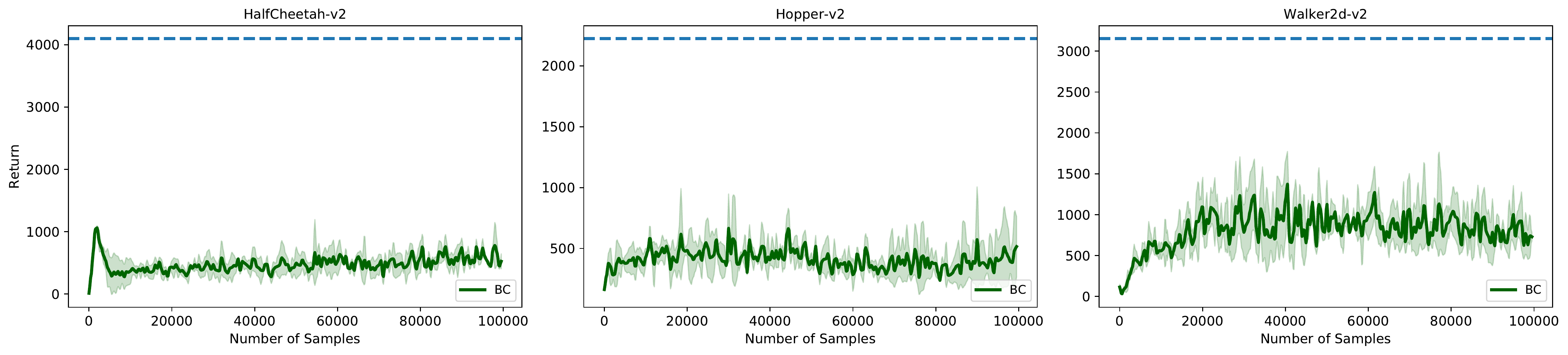}
\caption{Learning curves of imitation approaches ($\gamma=0.999$) including DAgger, GAIL, AIRL, WGAIL, FEM, GTAL, and BC. The solid lines are mean of results and the shaded region corresponds to the stand deviation over 3 random seeds, while the dashed lines indicate the performance of expert policies.}
\label{figure:learning_curve}
\end{figure}

\subsection{Imitating Environments}
\label{subsection:appendix_imitating_environments}
To evaluate algorithms for imitating environments, we add necessary information (e.g., robot position information) to the original state space defined by OpenAI Gym \cite{gym}. This is important since we need the learned environment model to predict the position information, upon which we can compute rewards for policy evaluation in the learned environments. Followed prior works \cite{slbo, mbpo}, the true reward function is assumed to be known in advance. We also normalize the robot position information by dividing $10$ (but the reward function is not normalized). We use SAC \cite{SAC} to re-train a sub-optimal policy as what we have done when imitating policies. We collect samples using this sub-optimality on true environments. Algorithmic configuration for BC and GAIL is the same as the one of imitating policies. Different from imitating-policies, the model output space (action space) is not bounded between $-1$ and $+1$. To overcome this difficulty, we normalize the model's outputs with statistics obtained from given demonstrations. During the training process, we also periodically evaluate the policy value of data-collecting policies on the learned environment models.

\end{document}